\documentclass[twoside]{article}

\usepackage{hyperref}
\usepackage{authblk}

\usepackage{url}
\usepackage{natbib}
\usepackage{amssymb,amsfonts,amsmath,amsthm}
\usepackage{bm}
\usepackage{graphicx} 
\usepackage{caption}
\usepackage{subcaption}
\usepackage{algorithm}
\usepackage{algpseudocode}
\newtheorem{theorem}{Theorem}
\newtheorem{lemma}{Lemma}

\DeclareMathOperator*{\argmax}{\arg\!\max}
\DeclareMathOperator*{\argmin}{\arg\!\min}

%
%

\begin{document}

%

%


\title{Estimating the Accuracies of Multiple Classifiers Without Labeled Data}

\author[1,*]{Ariel Jaffe}
\author[1,**]{Boaz Nadler}
\author[2,3,***]{Yuval Kluger}
\affil[1]{\footnotesize Dept. of Computer Science and Applied Mathematics, Weizmann Institute of Science, Rehovot Israel 76100}
\affil[*]{\url{ariel.jaffe@weizmann.ac.il}} 
\affil[**]{\url{boaz.nadler@weizmann.ac.il}}
\affil[2]{Dept. of Pathology,  Yale University, School of Medicine, New Haven, CT 06520}
\affil[3]{NYU Center for Health Informatics and Bioinformatics New York University, Langone Medical Center, 227 East 3030\textsuperscript{th} Street, New York, NY 10016, USA}
\affil[***]{\url{yuval.kluger@yale.edu}}
\date{}

%


\maketitle

\begin{abstract}
  In various situations one is given only the predictions of multiple classifiers over a large unlabeled test data. This scenario raises the following questions: Without any labeled data and without any a-priori knowledge about the reliability of these different classifiers, is it possible to consistently and computationally efficiently estimate their accuracies? Furthermore, also in a completely unsupervised manner, can one construct a more accurate unsupervised ensemble classifier? In this paper, focusing on the binary case, we present simple, computationally efficient algorithms to solve these questions.   Furthermore, under standard classifier independence assumptions, we prove our methods are consistent and study their asymptotic error. Our approach is spectral, based on the fact that the off-diagonal entries of the classifiers' covariance matrix and 3-d tensor are rank-one. We illustrate the competitive performance
of our algorithms via extensive experiments on both artificial and real datasets. 

 \end{abstract}

\section{Introduction}

Consider a classification problem from an instance space \(\mathcal X\) to an output label set \(\mathcal Y=\{1,\ldots,K\}\). In contrast to the classical supervised setting, in various contemporary applications, one has access only to the predictions  of multiple experts or classifiers over a large number of unlabeled instances. 
Moreover, the reliability of these experts may be unknown, and at test time there is no labeled data to assess it. This occurs for example  when due to privacy considerations each classifier is trained with its own possibly proprietary labeled data, unavailable to us.  
Another scenario is crowdsourcing, where an annotation task over many instances is distributed to many annotators whose reliability is a-priori unknown, see for example \cite{Welinder_2010,Whitehill_2009,Sheshadri_SQUARE_2013}. This setup, denoted as unsupervised-supervised learning in \cite{Donmaz_2010}, appears in several  other application domains,  including decision science, economics and  medicine, see \cite{Snow,Raykar,Parisi_2014}.  

Given only the $m\times n$\ matrix \(Z,\) or a significant part of it, with \(Z_{ij}=f_{i}(x_j)\) holding the predictions of the given \(m\) classifiers over  \(n\) instances, and without any labeled data,  two fundamental questions arise: (i) Under the assumption that different classifiers make independent errors, is it possible to consistently estimate the accuracies of the \(m\) classifiers in a computationally efficient way; and (ii) is it possible to construct, again by some computationally efficient procedure, an unsupervised ensemble learner, more accurate than most if not all of the original \(m\) classifiers.

The first question is important in cases where obtaining the predictions of these \(m\) classifiers is by itself an expensive task, and after collecting a certain number of instances and their predictions, we wish to pick only a few of the most accurate ones, see \cite{Rokach_Ensemble_Pruning_2009}. The second question, also known as \textit{offline consensus}, is of utmost importance in improving the quality of automatic decision making systems based on multiple sources of information.

Beyond the simplest approach of majority voting, perhaps the first to define and address these questions were \cite{Dawid_79}. With the increasing popularity of crowdsourcing and large scale expert opinion systems, the last years have seen a surge of interest in these problems, see   \cite{Sheng_2008,Whitehill_2009},
  \cite{Raykar}, \cite{PLATANIOS_14} and references therein. Yet, the most common methods to address questions (i) and (ii) above are based on the expectation maximization (EM) algorithm, already proposed in this context by Dawid and Skene, and whose only guarantee is convergence to a local maxima. 

Two recent exceptions, proposing spectral (and thus computationally efficient) methods with strong consistency guarantees are   \cite{Karger_2011} and \cite{Parisi_2014}.  \cite{Karger_2011} assume a spammer-hammer model, where each classifier is either perfectly correct or totally random and develop a spectral method to detect which one is which. \cite{Parisi_2014}\ derive a spectral approach to address questions (i) and (ii)\ above in the context of binary classification. Their approach, however, has several limitations. First, they do not actually estimate each classifier sensitivity and specificity, but only show how to consistently rank them according to their balanced accuracies. Second, their unsupervised learner assumes that all classifiers have balanced accuracies close to 1/2 (random). Hence, their ensemble learner  may be suboptimal, for example, when few classifiers are significantly more accurate than all others. 

In this paper we extend and generalize the results of \cite{Parisi_2014} in several directions and make the following contributions: In Sec. \ref{sec:psi_eta}, focusing on the binary case, we present a simple spectral method to estimate the sensitivity and specificity of each classifier, assuming the class imbalance is  known.  Hence, the problem boils down to estimating a single-scalar -- the class imbalance. In Section \ref{sec:estimate_b} we present two different methods to do so. First, in Sec. \ref{sec:b_tensor}, we prove
that the off-diagonal elements of the $m \times m $ covariance matrix and the $ m \times m \times m$ joint covariance tensor of the set of classifiers are both rank 1. Moreover the covariance matrix and tensor share the same eigenvector but with different eigenvalues, from which the class imbalance can be extracted by a simple least-squares procedure. In Sec. \ref{sec:algo_likelihood}, we devise a second algorithm to estimate the class imbalance by a restricted likelihood approach. The maxima of this function is attained at the class imbalance, and can thus be found by a one-dimensional scan.
Both algorithms are computationally efficient, and under the assumption that classifiers make independent errors, are also proven to be consistent. For the first method, we also prove it is rate optimal with asymptotic error  \(\mathcal O_{P}(1/\sqrt{n})\), where $n$ is the number of unlabeled samples. Our work thus provides a simple and elegant solution to the long-standing problem originally posed
by Dawid and Skene [2], whose previous solutions were mostly
based on expectation maximization approaches
to the full likelihood function.  


In Sec. \ref{sec:multiclass} we consider the multiclass case. Building upon standard reductions from multiclass to binary, we devise a method to estimate the class probabilities and the diagonal entries of the confusion matrices of all classifiers. We also prove that in the multiclass case, using only the first and second moments of these binary reductions, it is in general not possible to estimate all entries of the confusion matrices of all classifiers.
This motivates the development of tensor or higher order methods to solve the multi-class case, as for example in \cite{Zhang_2014}. In Sec. \ref{sec:simlations} we illustrate our methods  on both real and artificial data. The results on  real data show that our proposed ensemble learner achieves a  competitive  performance even in practical scenarios where the  assumption of independent classifiers' errors does not hold precisely.

\paragraph{Related Work} Under the assumption that all classifiers make independent errors, the crowdsourcing problem we address is equivalent to learning a mixture of discrete product distributions. This problem was studied, among others, by \cite{Freund_1999} for the case of $k=2$ distributions,  and by \cite{Feldman_2008} for $k >2$. Important observations regarding the low-rank spectral structure of the second and third moments of such distributions were made by  \cite{Anand_2012_a,Anand_2012_b}.
Building upon these results, recently \cite{Jain_2014} and \cite{Zhang_2014}, devised computationally efficient algorithms to estimate the parameters of the mixture of product distributions, which are equivalent to the confusion matrices and class probabilities in our problem. 

Our first method to estimate the class imbalance in the binary case using the mean-centered 3-d tensor is closely related to these works, with some notable differences. One key difference is that the above works study non-centered tensors of classifiers' outputs, and hence for a k-class problem, need to resolve the structure of rank-k tensors. In contrast, we work with centered matrices and tensors. In the binary case with \(k=2,\) we thus obtain a simpler rank-1 tensor, which we do not even need to decompose, but only extract a single scalar from it. A second difference is that the above methods require stronger assumptions on the classifiers. For example, \cite{Zhang_2014} divide the classifiers into groups and assume that within each group, on average classifiers are better than random. Due to these differences, our resulting algorithm is significantly simpler.  

Our second algorithm for estimating the class imbalance, based on a restricted likelihood approach is totally different from these tensor-based works, as it requires only a spectral decomposition of the classifiers' covariance matrix, and then optimizes a 1-d function of the full likelihood of the data. 
On both simulated and real data, this second approach had at least as good as, and in some cases better accuracy compared to the tensor based method. Finally, while we focus on classification, our algorithms may also be of interest to  learning a mixture of  discrete product distributions.

\section{Problem Setup}
        \label{sec:setup}
We consider the following binary classification problem, as also studied in several works  (\cite{Dawid_79,Raykar,Parisi_2014}).
Let  $\mathcal{X}$ be an instance space with an output space $\mathcal{Y}=\{-1,1\}$. A labeled instance $(x,y) \in  \mathcal{X} \times \mathcal{Y}$ is a realization of the random variable $(X,Y),$ which has an unknown probability density $p(x,y), $ and $X$ and $Y$ marginals $p_X(x)$ and $p_Y(y),$ respectively.
We further denote by $b$ the class imbalance of $Y$,
\[
b = \Pr(Y=1)-\Pr(Y=-1)=p_{Y}(1)-p_Y(-1).
\]

Let $\{f_i\}_{i=1}^m$ be  $m\geq 3$  classifiers operating on $\mathcal{X}$. 
In this binary setting,
the accuracy of the $i$-th classifier is fully specified by its sensitivity $\psi_i$ and specificity $\eta_i$,
\begin{eqnarray}
\psi_i=& \Pr\left(f_i(X)=1|Y=1\right) \notag \\ 
 \eta_i=& \Pr\left(f_i(X)=-1|Y=-1\right) \notag.
\end{eqnarray}
For future use, we denote by $\pi_i$ its balanced accuracy, 
\[
\pi_i = (\psi_i+\eta_i)/2.
\]
 In this paper we consider the following totally \textit{unsupervised} scenario.   Let $Z$ be a $m \times n $ matrix with entries $Z_{ij}=f_i(x_j),i=1,\ldots, m,j=1,\ldots,n$, where $f_i(x_j)$ is the label predicted at instance $x_j$ by classifier $f_i$. In particular, we assume no prior knowledge about the \(m\) classifiers, so their accuracies (sensitivities $\psi_{i}$ and specificities $\eta_{i}$) are all unknown.
 
 Given only the matrix $Z$ of binary predictions\footnote{For simplicity of exposition, we assume the matrix is fully observed. While beyond the scope of this paper, our proposed methods and theory continue to hold if few entries are missing (at random), such that accurate estimates of various means,  covariances and tensors, as detailed in Sections 3-4 are still possible. }, we consider the following two problems: (i) consistently and computationally efficiently estimate the sensitivity and specificity of each classifier, and (ii) construct a more accurate ensemble classifier. As discussed below, under certain assumptions, a solution to the first problem readily yields a solution to the second one.

To tackle these problems, we make the following three assumptions: (i) The $n$ instances $x_j$ are i.i.d. realizations from the marginal $p_X(x)$. (ii) The $m$ classifiers are conditionally independent. That is, for every pair of classifiers $f_i,f_j $ with $i \neq j$ and for all labels $a_i,a_j \in \{-1,1\}$,
\begin{multline}
\Pr(f_i=a_i,f_j = a_j|Y=y) = \\ \Pr(f_i=a_i|Y=y)\Pr(f_j=a_j|Y=y).
\label{eq:pair}
\end{multline} 
(iii) Most of the classifiers are better than random, in the sense that for more than half of all classifiers, $\pi_i>0.5$.
Note that (i)-(ii) are standard assumptions in both the supervised and unsupervised settings, see \cite{Dietterich,Dawid_79,Raykar,Parisi_2014}.
Assumption (iii) or a variant thereof is needed, given an inherent \(\pm 1\) sign ambiguity in this fully unsupervised problem.
\section{Estimating ${\psi}$  and $\eta$ with a known class imbalance.}
        \label{sec:psi_eta}

For some classification problems, the class imbalance $b$ is known. One example is in epidemiology, where the overall prevalence of a certain disease in the population is known, and the classification problem is to predict its presence, or future onset, in individuals given their observed features (such as blood results, height, weight, age, genetic profile, etc).

Assuming $b$ is known, \cite{Donmaz_2010} presented a simple method to estimate the error rates of all classifiers under a symmetric noise model, where $\psi_i=\eta_i$ for all $i$, and EM methods in the general case, see also \cite{Raykar}. We instead
build upon the spectral approach in \cite{Parisi_2014}, and present a computationally efficient method to consistently estimate the sensitivities and specificities of all \(m\) classifiers.
To motivate our approach, it is instructive to study the limit of an infinite unlabeled set size, \(n\to\infty\), where the mean values of the
 classifiers \(\mu_i=\mathbb{E}[f_i(X)]\), and their \(m\times m\) population covariance matrix 
\(R=\mathbb{E}\left[(f_i(x)-\mu_i)(f_j(x)-\mu_j) \right]\), 
are all perfectly known. 

The following two lemmas show that $R$ and $\{\mu_i\}_{i=1}^m$ contain the information needed to extract the specificities and sensitivities of the \(m\) classifiers. Lemma \ref{lem:v} appeared in \cite{Parisi_2014}, and implies that given the value of $b$ one may compute the balanced accuracies of all classifiers. Lemma \ref{lem:psi_eta} is new and shows how to extract their sensitivities and specificities. Its proof appears in the appendix.
\begin{lemma}
The off diagonal elements of the matrix $R$ are identical to those of a rank one matrix $\mathbf v \mathbf v^T$, whose vector $\mathbf v$, up to a $\pm1 $ sign ambiguity, is equal to
\begin{equation}
\mathbf v = \sqrt{1-b^2}(2 \boldsymbol \pi-1),
\label{eq:v}
\end{equation}
where the vector $\boldsymbol \pi = (\pi_1, \ldots,\pi_m)$ contains the balanced accuracies of the $m$ classifiers.
\label{lem:v}
\end{lemma}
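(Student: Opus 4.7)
The plan is to exploit the conditional expectations $\mathbb{E}[f_i \mid Y]$, which under the conditional independence assumption force the off-diagonal covariances into a rank-one factorization. First I would compute the conditional means directly from the definitions of $\psi_i$ and $\eta_i$. Since $f_i$ takes values in $\{-1,+1\}$,
\[
\mathbb{E}[f_i \mid Y=1] = 2\psi_i - 1, \qquad \mathbb{E}[f_i \mid Y=-1] = 1 - 2\eta_i.
\]
Because $Y\in\{-1,+1\}$, any function of $Y$ is affine in $Y$, and matching the two values above gives
\[
\mathbb{E}[f_i\mid Y] \;=\; (\psi_i - \eta_i) + (2\pi_i - 1)\,Y.
\]
The slope $2\pi_i - 1$ is precisely the quantity appearing in the claimed vector $\mathbf v$.

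Next I would apply the law of total covariance to $R_{ij} = \mathrm{Cov}(f_i, f_j)$ for $i \neq j$:
\[
R_{ij} \;=\; \mathbb{E}\!\left[\mathrm{Cov}(f_i, f_j \mid Y)\right] \;+\; \mathrm{Cov}\!\left(\mathbb{E}[f_i\mid Y],\, \mathbb{E}[f_j\mid Y]\right).
\]
The first term vanishes by conditional independence (\ref{eq:pair}). Substituting the affine expression for $\mathbb{E}[f_i\mid Y]$ into the second term, the intercepts drop out and only the $Y$-slopes survive, giving
\[
R_{ij} \;=\; (2\pi_i - 1)(2\pi_j - 1)\,\mathrm{Var}(Y).
\]
Since $Y^2 = 1$ almost surely and $\mathbb{E}[Y]=b$, we have $\mathrm{Var}(Y) = 1 - b^2$, so $R_{ij} = v_i v_j$ with $v_i = \sqrt{1-b^2}(2\pi_i - 1)$, which matches (\ref{eq:v}). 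The global $\pm 1$ sign ambiguity is inherent, since $\mathbf v \mathbf v^T = (-\mathbf v)(-\mathbf v)^T$.

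The main obstacle, modest as it is, is the affine identification of $\mathbb{E}[f_i\mid Y]$ and in particular the observation that its slope equals $2\pi_i - 1$ rather than some less symmetric combination of $\psi_i$ and $\eta_i$; once this is in hand, the rest is a mechanical use of the total-covariance identity. A more elementary but lengthier route would expand $R_{ij} = \mathbb{E}[f_i f_j] - \mu_i \mu_j$ directly, using conditional independence inside $\mathbb{E}[f_i f_j] = p(2\psi_i-1)(2\psi_j-1) + (1-p)(1-2\eta_i)(1-2\eta_j)$ and verifying by algebra that all terms linear in $b$ cancel. This second route reaches the same factorization but obscures the probabilistic reason — $\mathrm{Var}(Y) = 1 - b^2$ — for the $\sqrt{1 - b^2}$ factor.
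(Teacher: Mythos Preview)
Your proof is correct. Note, however, that the paper does not actually supply its own proof of this lemma: it states that Lemma~\ref{lem:v} ``appeared in \cite{Parisi_2014}'' and only proves the new Lemma~\ref{lem:psi_eta}. So there is no in-paper argument to compare against directly.

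That said, the paper's proof of the analogous third-order result (Lemma~\ref{lem:T}) proceeds by your ``lengthier route'': it passes to $\tilde f_i=(f_i+1)/2$, expands all moments explicitly in terms of $p,\tilde\psi_i,\tilde\eta_i$, and collects terms by brute force. Your use of the law of total covariance together with the affine representation $\mathbb{E}[f_i\mid Y]=(\psi_i-\eta_i)+(2\pi_i-1)Y$ is cleaner and makes the source of the factor $1-b^2=\mathrm{Var}(Y)$ transparent; the same idea would also streamline the tensor computation via the third central moment of $Y$, which equals $-2b(1-b^2)$.
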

\begin{lemma}
Given the class imbalance \(b,\) the vector $\boldsymbol \mu=(\mu_1, \ldots,\mu_m)$ containing the mean values of the $m$ classifiers, and   $\mathbf  v$ of Eq.  \eqref{eq:v}, the values of $\boldsymbol \psi=(\psi_1,\ldots,\psi_m)$ and  $\boldsymbol \eta=(\eta_1,\ldots,\eta_m)$ with the specificities and sensitivities of the \(m\) classifiers are given by
\begin{equation}
\boldsymbol \psi =  \tfrac{1}{2}\Big(1+\boldsymbol \mu +\mathbf v \sqrt{\tfrac{1-b}{1+b}}\Big),
\boldsymbol \eta =  \tfrac{1}{2}\Big(1- \boldsymbol\mu +\mathbf v \sqrt{\tfrac{1+b}{1-b}}\Big). \label{eq:psi_eta}
\end{equation}
\label{lem:psi_eta}
\end{lemma}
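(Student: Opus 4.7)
The plan is to derive two linear relations between $\psi_i$ and $\eta_i$ (for each classifier $i$) and then solve the resulting $2\times 2$ system. All computations are coordinate-wise, so I will work with a single classifier $f_i$ and then read off the vector equations.

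First, I would express $\mu_i = \mathbb{E}[f_i(X)]$ in terms of the unknowns $\psi_i,\eta_i,b$. Since $f_i$ takes values in $\{-1,1\}$, we have $\mu_i = 2\Pr(f_i(X)=1)-1$. Conditioning on $Y$ and using $\Pr(Y=1) = (1+b)/2$, $\Pr(Y=-1) = (1-b)/2$, a short calculation gives
\begin{equation*}
\mu_i \;=\; (\psi_i - \eta_i) \;+\; b\,(2\pi_i - 1),
\end{equation*}
which is the first linear relation between $\psi_i$ and $\eta_i$ (for known $b$ and $\mu_i$). Intuitively, $\psi_i-\eta_i$ measures the classifier's bias toward the $+1$ label, while $2\pi_i-1$ measures its discriminative power, and these two combine with the class imbalance to produce the mean.

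Next, I would invoke Lemma \ref{lem:v}, which says $v_i = \sqrt{1-b^2}\,(2\pi_i - 1)$. This immediately yields the second linear relation
\begin{equation*}
\psi_i + \eta_i \;=\; 2\pi_i \;=\; 1 + \frac{v_i}{\sqrt{1-b^2}},
\end{equation*}
and, substituted back into the first relation, also gives
\begin{equation*}
\psi_i - \eta_i \;=\; \mu_i - \frac{b\,v_i}{\sqrt{1-b^2}}.
\end{equation*}

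Finally, solving this $2\times 2$ system by adding and subtracting, $\psi_i = \tfrac{1}{2}\bigl[(\psi_i+\eta_i)+(\psi_i-\eta_i)\bigr]$ and analogously for $\eta_i$, and simplifying $(1 \mp b)/\sqrt{1-b^2} = \sqrt{(1\mp b)/(1\pm b)}$, produces exactly \eqref{eq:psi_eta}. There is no real obstacle here; the only care needed is the algebraic simplification of $(1-b)/\sqrt{1-b^2}$ and $(1+b)/\sqrt{1-b^2}$ into the square-root form that appears in the statement, and the implicit assumption $|b|<1$ so that these quantities are well defined. Vectorizing over $i=1,\dots,m$ then gives the stated formulas for $\boldsymbol\psi$ and $\boldsymbol\eta$.
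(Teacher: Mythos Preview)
Your proposal is correct and follows essentially the same approach as the paper's proof. The paper introduces the auxiliary notation $\boldsymbol{\delta}=(\boldsymbol{\psi}-\boldsymbol{\eta})/2$ and cites the identity $\boldsymbol{\mu}=2\boldsymbol{\delta}+b(2\boldsymbol{\pi}-1)$ from \cite{Parisi_2014} rather than deriving it, but otherwise the two arguments are the same: invert Lemma~\ref{lem:v} to get $\boldsymbol{\pi}$ from $\mathbf{v}$, use the $\boldsymbol{\mu}$ identity to get $\boldsymbol{\psi}-\boldsymbol{\eta}$, and then add and subtract.
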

To uniquely recover $\mathbf v$ from the off-diagonal entries of $R$, we further assume that at least three classifiers have different balanced accuracies, which are all different from 1/2 (so $2\pi_{i}-1\neq 0)$. 
In practice, 
the quantities $\{\mu_i\}_{i=1}^m$, $R$ and consequently the eigenvector ${\bf v}$ are all unknown. We thus estimate them from the given data, and plug into Eq. \eqref{eq:psi_eta}. Let us denote by $\hat{\boldsymbol{\mu}}$  and $\hat R$ the sample mean and covariance matrix of all classifiers, whose entries are given by
\begin{eqnarray}
\hat \mu_i &=& \frac{1}{n}\sum_{k=1}^n f_i(x_k), \\ \notag \hat r_{ij} &=& \frac{1}{n-1}\sum_{k=1}^n (f_i(x_k)-\hat\mu_i)(f_{j}(x_k)-\hat\mu_{j}).
\label{eq:rii}
\end{eqnarray} 
Estimating the vector \(\mathbf v\) from the noisy matrix \(\hat R\) can be cast as a low-rank matrix completion problem. 
  \cite{Parisi_2014} present several methods to construct such an estimate $\hat{\mathbf v}$, and resolve its inherent  $\pm1$ sign ambiguity, via assumption (iii).
Inserting $\hat{\boldsymbol{\mu}}$ and $\hat{\mathbf{v}}$ into \eqref{eq:psi_eta}, gives the following estimates for $\boldsymbol{\psi}$ and $\boldsymbol{\eta}$,
\begin{equation}
\hat{\boldsymbol \psi} =  \tfrac{1}{2}\Big(1+\hat{\boldsymbol \mu} + \hat{\mathbf v} \sqrt{\tfrac{1-b}{1+b}}\Big),
\hat{\boldsymbol \eta} =  \tfrac{1}{2}\Big(1-\hat{\boldsymbol \mu} + \hat{\mathbf  v} \sqrt{\tfrac{1+b}{1-b}}\Big).
\label{eq:psi_eta_est}
\end{equation}

The following lemma, proven in the appendix, presents some statistical properties of $\hat{\boldsymbol{\psi}}$ and $\hat{\boldsymbol{\eta}}$.
\begin{lemma} Under assumptions (i)-(iii) of Section \ref{sec:setup}, 
$\hat{\boldsymbol{\psi}}$ and $\hat{\boldsymbol{\eta}}$ are consistent estimators of $\boldsymbol{\psi}$ and $\boldsymbol{\eta}$. Furthermore, 
as $n \to \infty$,  
 \begin{equation}
\hat {\psi}_i = \psi_i + \mathcal{O}_P\left(\frac{1}{\sqrt{n}}\right), \quad \hat {\eta}_i = \eta_i + \mathcal{O}_P\left(\frac{1}{\sqrt{n}}\right).
\end{equation}
\label{lem:psi_eta_est}
\end{lemma}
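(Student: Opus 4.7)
The plan is to propagate the parametric rate from the empirical moments through to $\hat{\boldsymbol\psi}$ and $\hat{\boldsymbol\eta}$ via the affine formula \eqref{eq:psi_eta_est}. First I would observe that since $b$ is fixed and known, the coefficients $\tfrac12\sqrt{(1-b)/(1+b)}$ and $\tfrac12\sqrt{(1+b)/(1-b)}$ are constants, so it suffices to show that $\hat{\boldsymbol\mu}-\boldsymbol\mu$ and $\hat{\mathbf v}-\mathbf v$ are both $\mathcal{O}_P(1/\sqrt n)$ (entry-wise). Then linearity gives
\[
\hat\psi_i-\psi_i=\tfrac12(\hat\mu_i-\mu_i)+\tfrac12\sqrt{\tfrac{1-b}{1+b}}(\hat v_i-v_i),
\]
and similarly for $\hat\eta_i-\eta_i$, which immediately yields both consistency and the stated rate.

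Step one is elementary: since $f_i(X)\in\{-1,+1\}$ is bounded, the i.i.d.\ samples $\{f_i(x_k)\}_{k=1}^n$ satisfy the CLT, so $\hat\mu_i-\mu_i=\mathcal{O}_P(1/\sqrt n)$. Applying the same reasoning to the bounded i.i.d.\ variables $f_i(X)f_j(X)$ shows each sample covariance satisfies $\hat r_{ij}-r_{ij}=\mathcal{O}_P(1/\sqrt n)$, and therefore $\|\hat R-R\|_F=\mathcal{O}_P(1/\sqrt n)$ (with $m$ fixed).

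Step two is the crux: controlling $\|\hat{\mathbf v}-\mathbf v\|_\infty$. By Lemma~\ref{lem:v}, the off-diagonal entries of $R$ coincide with those of the rank-one matrix $\mathbf v\mathbf v^T$. Under the extra identifiability assumption (three distinct balanced accuracies, all different from $1/2$), the vector $\mathbf v$ has at least three non-zero entries of sufficiently different magnitude, so the top eigenvalue $\|\mathbf v\|^2$ of the off-diagonal rank-one structure is bounded away from zero, giving a positive spectral gap. The estimator $\hat{\mathbf v}$ of \cite{Parisi_2014} is obtained by a continuous rank-one fit to the off-diagonal entries of $\hat R$; combining the entry-wise perturbation bound of step one with a standard Weyl / Davis-Kahan argument for the corresponding symmetric rank-one matrix yields $\|\hat{\mathbf v}-\mathbf v\|_\infty=\mathcal{O}_P(1/\sqrt n)$, up to a global $\pm1$ sign. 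Assumption~(iii) resolves the sign: for $n$ large enough $\mathrm{sign}(\hat v_i)=\mathrm{sign}(v_i)$ for most $i$ with high probability, so a majority-sign rule picks the correct branch eventually, and the rate is preserved because the sign becomes deterministic on an event of probability tending to one.

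The main obstacle is precisely this second step. The diagonal of $R$ equals $1-\mu_i^2$ rather than $v_i^2$, so the rank-one recovery is a low-rank matrix completion (not a plain eigendecomposition of $R$). One must check that whichever variant of the Parisi--Nadler estimator is used (off-diagonal least squares, iterative rank-one fit, or eigendecomposition of a symmetrized off-diagonal matrix) does not degrade the $1/\sqrt n$ rate. This reduces to verifying that the associated linear/nonlinear operator is Lipschitz at the true $\mathbf v$, which in turn follows from the bounded spectral gap guaranteed by the identifiability assumption. Once this Lipschitz continuity and the sign-resolution are in place, the conclusion is immediate from the affine form of \eqref{eq:psi_eta_est}.
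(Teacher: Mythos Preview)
Your proposal is correct and follows essentially the same route as the paper: both argue that $\hat{\boldsymbol\mu}$ and $\hat{\mathbf v}$ are $\mathcal{O}_P(1/\sqrt{n})$-consistent, and then use the fact that \eqref{eq:psi_eta_est} is an affine (linear) function of these two quantities when $b$ is known. The only difference is cosmetic: the paper dispatches the $\hat{\mathbf v}$ rate in one line by citing \cite{Parisi_2014}, whereas you sketch the Davis--Kahan/rank-one-completion argument and the sign-resolution explicitly.
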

In summary, assuming the class imbalance \(b\) is known, 
Eq. \eqref{eq:psi_eta_est} gives a computationally efficient way to estimate the sensitivities and specificities of all classifiers. Lemma \ref{lem:psi_eta_est} ensures that this approach is also consistent.  In the next section we show that the assumption of explicit knowledge of \(b\) can be removed, whereas in Section \ref{sec:multiclass}
we show that a similar approach can also (partly) handle the multiclass case.  

\subsection{Unsupervised Ensemble Learning}
\label{sec:new_sml}
We now consider the second problem discussed in Section \ref{sec:setup}, the construction of an unsupervised ensemble learner. To this end, note that under the stronger assumption that all classifiers make independent errors, the likelihood of a label \(y\) at an instance \(x\) with predicted labels \(f_1(x),\ldots,f_m(x)\) is
\begin{equation}
\mathcal L(f_1(x),\ldots,f_m(x))\,|\,y)=\prod_{i=1}^m \Pr(f_i(x)\,|\,y).
        \label{eq:full_likelihood}
\end{equation}
In Eq. (\ref{eq:full_likelihood}), the \textit{i}-th term \(\Pr(f_i(x)|y)\) depends on the specificity and sensitivity \(\psi_i\) and $\eta_i$ of the $i$-th classifier. 
While the likelihood is non-convex in  $\psi_i,\eta_i$ and $y$, if the former are known, there is a closed form solution for the maximum-likelihood value of the class label, 
\begin{equation}
        \label{eq:y_ML}
\hat y^{(\mbox{\tiny ML})} = \mbox{sign}\left({\textstyle\sum_{i}}\, f_i(x)\ln \alpha_i +\ \ln \beta_i\right)
\end{equation}
where
\begin{equation}
        \label{eq:alpha_beta}
\alpha_i = \frac{\psi_i\eta_i}{(1-\psi_i)(1-\eta_i)},\quad \beta_i = \frac{\psi_i(1-\psi_i)}{\eta_i(1-\eta_i)}.
\end{equation}
\cite{Parisi_2014}, assumed all classifiers are close to random, and via a Taylor expansion near $\psi=\eta=1/2$, showed that $\beta$ is approximately zero, and $\alpha_i\approx 1+4(2\pi_i-1)$. Plugging these into Eq. (\ref{eq:y_ML}), they  derived the following spectral meta-learner (SML), 
\begin{equation}
\hat y^{(\mbox{\tiny SML})} = \mbox{sign}\left(\textstyle\sum_i f_i(x)\hat v_i\right).
\end{equation}
Their motivation was that they only had estimates of the vector \(\mathbf v\), which according to Eq. (\ref{eq:v})  is proportional to \((2\boldsymbol\pi-1)\). Since we consistently estimate the individual specificities and sensitivities of the \(m\) classifiers, we  suggest to plug in these estimates directly into Eqs. (\ref{eq:alpha_beta}) and (\ref{eq:y_ML}). Our improved spectral approach, denoted i-SML, yields a more accurate ensemble learner when few classifiers are significantly better than random, so the linearization around \(\psi=\eta=1/2\) is inaccurate. We present such examples in Sec. \ref{sec:simlations}.
Finally, we note that as in \cite{Parisi_2014} and \cite{Zhang_2014}, we may use our i-SML as a starting guess for EM methods that maximize the full likelihood. 

\section{Estimation of the class imbalance}
        \label{sec:estimate_b}
We now consider the problem of estimating $\boldsymbol \psi$ and $\boldsymbol \eta$ when the class imbalance $b$ is unknown.
Our proposed approach is to first estimate $b$, and then plug this estimate into Eq. \eqref{eq:psi_eta_est}. 
We present two different methods to estimate the class imbalance. The first  uses the covariance matrix and the 3-dimensional covariance tensor of all $m$ classifiers.
The second method exploits properties of the likelihood function.
As detailed below,  both methods are computationally efficient, but require stronger assumptions than Eq.\eqref{eq:pair} on  independence of classifier errors to prove their consistency. 
\subsection{Estimation via the 3-D covariance tensor}
\label{sec:b_tensor}

For the method derived in this subsection, we assume that the classifiers are conditionally independent in triplets. That is, for every  $f_i,f_j , f_k$ with $i \neq j \neq k$ and for all labels $a_i,a_j,a_k \in \{-1,1\}$,
\begin{multline}
\Pr(f_i=a_i,f_j = a_j,f_k = a_k|y) = \\ \Pr(f_i=a_i|y)\Pr(f_j=a_j|y)\Pr(f_k=a_k|y).
\label{eq:triplet}
\end{multline}
Let $T=(T_{ijk})$ denote the 3-dimensional covariance tensor of the $m$ classifiers $\{f_i(X)\}_{i=1}^m$,
\begin{equation}                                        
T_{ijk} = \mathbb E\left[(f_i(X)-\mu_i)(f_j(X)-\mu_j)(f_k(X)-\mu_k)\right].
\label{eq:tensor}
\end{equation}
The following lemma, proven in the appendix, provides the relation between the tensor $T$, the class imbalance $b$ and the balanced accuracies of the $m$ classifiers.
\begin{lemma} 
\label{lem:T}
Under assumption \eqref{eq:triplet}, the following holds for all $i \neq j \neq k$, 
\begin{equation}
T_{ijk} = -2b(1-b^2)(2\pi_i-1)(2\pi_j-1)(2\pi_k-1).
\label{eq:r_ijk}
\end{equation}
\end{lemma}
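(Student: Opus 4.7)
The plan is to condition on the latent label $Y$ and use the triplet independence assumption to factor the expectation. The main quantities I need are the conditional means of the centered classifiers $(f_i(X)-\mu_i)$ given $Y=\pm 1$.

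First I would compute $\mu_i=\mathbb E[f_i(X)]$ by the law of total expectation. Since $\mathbb E[f_i(X)\mid Y=1]=2\psi_i-1$ and $\mathbb E[f_i(X)\mid Y=-1]=1-2\eta_i$, and $\Pr(Y=\pm 1)=(1\pm b)/2$, a short algebraic manipulation gives
\begin{equation*}
\mathbb E[f_i(X)-\mu_i\mid Y=1]=(1-b)(2\pi_i-1),\qquad \mathbb E[f_i(X)-\mu_i\mid Y=-1]=-(1+b)(2\pi_i-1),
\end{equation*}
where I have used that $\psi_i+\eta_i-1=2\pi_i-1$. This is the crucial reduction that makes the conditional factors a simple multiple of $(2\pi_i-1)$, with the $b$-dependence appearing cleanly.

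Next I would invoke assumption \eqref{eq:triplet}: since centering by the constant $\mu_i$ preserves conditional independence, the centered variables $(f_i-\mu_i),(f_j-\mu_j),(f_k-\mu_k)$ are conditionally independent given $Y$. Therefore
\begin{equation*}
T_{ijk}=\sum_{y\in\{-1,1\}}\Pr(Y=y)\,\prod_{\ell\in\{i,j,k\}}\mathbb E[f_\ell-\mu_\ell\mid Y=y].
\end{equation*}
Substituting the two conditional means computed above, the product $(2\pi_i-1)(2\pi_j-1)(2\pi_k-1)$ factors out, leaving
\begin{equation*}
T_{ijk}=(2\pi_i-1)(2\pi_j-1)(2\pi_k-1)\cdot\tfrac{1}{2}\bigl[(1+b)(1-b)^3-(1-b)(1+b)^3\bigr].
\end{equation*}

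Finally, the scalar in brackets simplifies: factor out $(1-b^2)/2$ and use $(1-b)^2-(1+b)^2=-4b$ to obtain $-2b(1-b^2)$, matching \eqref{eq:r_ijk}. I do not anticipate any real obstacle; the only delicate point is carrying the arithmetic of $\mu_i$ correctly so that $(2\pi_i-1)$ emerges as the common factor, and verifying that centering does not destroy the conditional independence needed for the factorization.
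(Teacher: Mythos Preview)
Your proof is correct and considerably more direct than the paper's. The paper transforms to $\{0,1\}$-valued classifiers $\tilde f_i=(f_i+1)/2$, computes the uncentered first, second, and third moments, expands the centered third moment $\tilde T_{ijk}$ as a sum of products of these, and then collects terms to extract the common factor $p(1-p)(1-2p)$ before converting back to $b$ and $\pi_i$. Your approach instead computes the conditional means $\mathbb E[f_i-\mu_i\mid Y=\pm 1]$ once, observes they are $\pm (1\mp b)(2\pi_i-1)$, and then the conditional-independence factorization of $T_{ijk}$ yields the result in one line. The gain is that you avoid the full expansion and term-by-term bookkeeping; the paper's route, while longer, is entirely mechanical and does not require spotting the clean form of the conditional centered mean. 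Both rely on exactly the same assumption~\eqref{eq:triplet}.
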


According to \eqref{eq:r_ijk}, the off diagonal elements of $T$ (with $i \neq j \neq k$) correspond to a \textit{rank one tensor},
\begin{equation}
T = \mathbf w \otimes \mathbf w \otimes \mathbf w,
\end{equation}
where $\otimes$ denotes the outer product and the vector $\mathbf w \in \mathbb{R}^m$ is equal to
\begin{equation}
\mathbf w = \left(-2b(1-b^2)\right)^{\frac{1}{3}}\cdot (2\boldsymbol{\pi}-1).
\label{eq:w}
\end{equation}
Note that unlike the vector $\mathbf v$ of the covariance matrix $R$, 
there is no sign ambiguity in the vector $\mathbf w$.

Moreover, comparing Eqs. \eqref{eq:v} and \eqref{eq:w}, the vectors $\mathbf v$ of $R$ and $\mathbf w$ of $T$ are both proportional to  $(2\boldsymbol \pi -1)$, where the proportionality factor depends on the class imbalance $b$. Hence, $\mathbf w= \alpha(b)^{1/3} \,\mathbf v$, and 
\begin{equation}
T = \alpha(b)\, \mathbf v \otimes \mathbf v \otimes \mathbf v
        \label{eq:T_alpha}
\end{equation}
where 
$
\alpha(b) = (-2b)/\sqrt{1-b^2}.
$
Inverting this expression yields the following relation,
\begin{equation}
b = - \alpha/\sqrt{4+\alpha^2}.
\label{eq:b}
\end{equation} 
Eq. \eqref{eq:b} thus shows, that in our setup, as $n \to \infty$, the first three moments of the data ($\boldsymbol \mu,R,T$) are sufficient to determine both the class imbalance and the sensitivities and specificities of all $m$ classifiers.

In practice, the tensor $T$ is unknown, though it can be estimated from the observed data by
\begin{equation}
\hat T_{ijk} = \frac{1}{n}\sum_{l=1}^n(f_i(x_l)-\hat\mu_i)(f_j(x_l)-\hat\mu_j)(f_k(x_l)-\hat\mu_k).
\label{eq:T}
\end{equation}
Given an estimate \(\hat{\mathbf v}\) from the matrix \(\hat R\), the scalar \(\alpha\) of Eq. (\ref{eq:T_alpha}) is estimated 
by least squares, 
\begin{equation}
\hat{\alpha} = \argmin_\alpha \sum_{i<j<k} \left( \hat T_{ijk} - \alpha\, \hat v_i\hat v_j \hat v_k\right)^2    \label{eq:hat_alpha}
.\end{equation}
A summary of the steps to estimate the class imbalance with the 3 dimensional tensor appears in Algorithm 1. The following lemma shows that this method yields an asymptotic error of 
$\mathcal O_P(1/\sqrt{n})$. This error rate is optimal
since even if we knew the ground truth labels $y_i$, estimating $b$ from them would still incur such an error rate.

\begin{lemma}\label{lemma:b_hat_tensor}
Let $\hat\alpha$ be given by Eq. (\ref{eq:hat_alpha}) and let $\hat b_n$ be the plug-in estimator from Eq. (\ref{eq:b}). Then, 
\begin{equation}
\hat b_n = b + \mathcal O_P\left(1/\sqrt{n}\right). 
\end{equation}
Consequently the plug-in estimators $\hat \psi_i,\hat \eta_i$ in Eq. (\ref{eq:psi_eta_est})\ also have
the same asymptotic error $ \mathcal O_P(1/\sqrt{n})$. 
\end{lemma}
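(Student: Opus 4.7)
The plan is to propagate an $\mathcal{O}_P(1/\sqrt{n})$ error through each step of the pipeline $\text{data} \to (\hat{\boldsymbol{\mu}}, \hat R, \hat T) \to \hat{\mathbf v} \to \hat\alpha \to \hat b_n$, using the central limit theorem for the empirical moments and repeated application of the delta method (i.e.\ local Lipschitz continuity) for each subsequent smooth map.

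First, since each $f_i(x_l) \in \{-1,+1\}$, all random variables entering $\hat{\boldsymbol{\mu}}$, $\hat R$ and $\hat T$ are bounded and thus have finite moments of every order. The multivariate CLT applied to the i.i.d.\ vectors $(f_1(x_l),\ldots,f_m(x_l))$, together with the standard fact that recentering the sample covariance and the sample third moment at $\hat{\boldsymbol{\mu}}$ rather than $\boldsymbol{\mu}$ only adds an $\mathcal{O}_P(1/n)$ term, yields $\hat\mu_i - \mu_i$, $\hat r_{ij}-r_{ij}$ and $\hat T_{ijk}-T_{ijk}$ are all $\mathcal{O}_P(1/\sqrt{n})$ uniformly in $i,j,k$ (recall $m$ is fixed).

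Second, the procedure used in \cite{Parisi_2014} to reconstruct $\hat{\mathbf v}$ from the off-diagonal entries of $\hat R$ is a smooth function of those entries in a neighbourhood of $R$, provided the rank-one off-diagonal part of $R$ has a strictly positive leading singular value. This holds by Lemma~\ref{lem:v} together with the assumption that at least three classifiers have $\pi_i \neq 1/2$ and distinct balanced accuracies. Standard matrix perturbation (Weyl and Davis--Kahan applied to a rank-one completion of the off-diagonal part, or direct differentiation of the closed-form formulas in \cite{Parisi_2014}) therefore gives $\hat{\mathbf v} - \mathbf v = \mathcal{O}_P(1/\sqrt{n})$, and assumption~(iii) fixes the sign with probability tending to one.

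Third, the least-squares estimator in Eq.~(\ref{eq:hat_alpha}) has the closed form $\hat\alpha = \bigl(\sum_{i<j<k} \hat T_{ijk}\hat v_i\hat v_j\hat v_k\bigr)\big/\bigl(\sum_{i<j<k}(\hat v_i\hat v_j\hat v_k)^2\bigr)$. The denominator converges in probability to $\sum_{i<j<k}(v_iv_jv_k)^2 > 0$ under the same non-degeneracy assumption, so $\hat\alpha$ is a smooth function of $(\hat T,\hat{\mathbf v})$ with bounded derivative on a neighbourhood of $(T,\mathbf v)$. A first-order Taylor expansion then yields $\hat\alpha - \alpha = \mathcal{O}_P(1/\sqrt{n})$. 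Finally, $b(\alpha) = -\alpha/\sqrt{4+\alpha^2}$ is $C^\infty$ on $\mathbb{R}$ with uniformly bounded derivative, so $\hat b_n - b = \mathcal{O}_P(|\hat\alpha-\alpha|) = \mathcal{O}_P(1/\sqrt{n})$. Substituting $\hat b_n$, $\hat{\boldsymbol{\mu}}$ and $\hat{\mathbf v}$ into Eq.~(\ref{eq:psi_eta_est}), whose right-hand side is smooth in $b \in (-1,1)$ and linear in $\mathbf v$ and $\boldsymbol{\mu}$, and applying the delta method once more, gives the claimed $\mathcal{O}_P(1/\sqrt{n})$ rate for $\hat\psi_i$ and $\hat\eta_i$.

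The main obstacle I expect is verifying the stability of the rank-one completion step: $R$ itself is not rank one, since its diagonal contains variances rather than $v_i^2$, so one cannot apply Davis--Kahan to $\hat R$ directly. Some care is required to argue that the specific off-diagonal fitting procedure used to build $\hat{\mathbf v}$ is a smooth function of the off-diagonal entries of $\hat R$ with a Lipschitz constant that remains finite under our non-degeneracy assumptions; once that is established, every remaining step is a routine instance of the delta method and the stated rate follows.
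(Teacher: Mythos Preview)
Your proposal is correct and follows essentially the same route as the paper's own proof: establish $\mathcal{O}_P(1/\sqrt{n})$ rates for the empirical moments, invoke the $\sqrt{n}$-consistency of $\hat{\mathbf v}$ from \cite{Parisi_2014}, and then apply the delta method successively to the least-squares formula for $\hat\alpha$, to the smooth map $b(\alpha)=-\alpha/\sqrt{4+\alpha^2}$, and finally to Eq.~(\ref{eq:psi_eta_est}). The paper is terser---it simply cites \cite{Parisi_2014} for the $\hat{\mathbf v}$ step rather than discussing the rank-one completion stability you flag---but the argument is the same, and your explicit closed form for $\hat\alpha$ and the non-degeneracy of its denominator are exactly what the paper uses implicitly.
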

The proof of Lemma \ref{lemma:b_hat_tensor} appears in the appendix. Following it are some remarks regarding the accuracy of various estimates as a function of the number of classifiers and their accuracies. A\ detailed study of this issue is beyond the scope of this paper. 


\begin{algorithm}[t]
\caption{Estimating class imbalance with the 3dimensional covariance tensor}
\begin{algorithmic}[1]
\State Estimate covariance matrix $R$ by Eq. \eqref{eq:rii}. 
\State Estimate $\mathbf v$ from the off diagonal entries of  $\hat{R}$ (see appendix).
\State Estimate the 3 dimensional tensor $T$ by Eq. \eqref{eq:T}.
\State Estimate $\alpha$ via Eq. (\ref{eq:hat_alpha}) and  $b$ via Eq. \eqref{eq:b}.
\end{algorithmic}
\end{algorithm}

\label{sec:algoT}

\subsection{A restricted-likelihood approach}

\label{sec:algo_likelihood}
The algorithm in Section \ref{sec:b_tensor} relied only on the first three moments of the data. We now present a second method to estimate the class imbalance, based on a restricted likelihood function of all the data. 
This method  is potentially more accurate, however it requires the following stronger assumption of joint conditional independence of all $m$ classifiers, 
\begin{equation}
\Pr(f_1\!=\!a_1,\ldots,f_m\! =\! a_m|y) = \prod_{i=1}^m \Pr(f_i\!=\!a_i|y).
        \label{eq:all_f_independent}
\end{equation}

It is important to note that under this assumption, the  problem at hand is equivalent to learning a mixture of two product distributions, addressed in \cite{Freund_1999}. For this problem, several recent works suggested spectral tensor decomposition approaches, see  \cite{Anand_2012_a,Jain_2014,Zhang_2014}.

In contrast, we now present a totally different approach, not based on tensor decompositions. Our starting point is Eq. \eqref{eq:psi_eta_est} which provides consistent estimates of $\boldsymbol \psi$ and $\boldsymbol \eta$ given the class imbalance $b$. In particular, 
any guess $\tilde b$ of the class imbalance, yields corresponding guesses for the sensitivities and specificities of all \(m\) classifiers, 
$\hat{\boldsymbol\psi}(\tilde b)$ and $\hat{\boldsymbol\eta}(\tilde b)$. As described below, our approach is to construct a suitable functional \(\hat G_n (Z|\tilde b)\), that depends on both $\tilde b$ and on the observed data \(Z\), whose maxima as a function of \(\tilde b\), as $n\to\infty$ is attained at the true class imbalance \(b\).  

To this end, let \(\mathbf f(x)=\left( f_1(x),\ldots, f_m(x)\right)\) denote the vector of labels predicted by the \(m\) classifiers at an instance \(x\). We define the following approximate log-likelihood, assuming class imbalance \(\tilde b \)
\begin{equation}
\hat g_n(\mathbf f(x)|\tilde b ) = \log \Pr\left(\mathbf f(x)|\hat{\boldsymbol \psi}(\tilde b),\hat{\boldsymbol \eta}(\tilde b),\tilde b\right) 
\label{eq:g_def}
\end{equation} 
where $\hat{\boldsymbol\psi}$ and $\hat{\boldsymbol\eta}$ are given by Eq. (\ref{eq:psi_eta_est}), and an   expression for the above probability is given in Eq. (\ref{eq:Prob_f_b}) in the appendix.   
Our functional  \(\hat G_{n} (Z|\tilde b)\)
is the average of  $\hat g_n(\mathbf f(x)|\tilde b)$ over all instances $x_j$,
\begin{equation}
\hat G_{n} (Z|\tilde b)= \frac{1}{n}\sum_{j=1}^n \hat g_n(\mathbf f(x_j)|\tilde b).
 \label{eq:G_def}
\end{equation}
Note that the estimates of $\boldsymbol\psi,\boldsymbol\eta$ in Eq. \eqref{eq:psi_eta_est} become numerically unstable for $b$ close to \(\pm 1\). Hence, in what follows we assume there is an a-priori known \(\delta>0\), such that the true class imbalance $b\in[-1+\delta,1-\delta]$. The estimate of the class imbalance is then defined as 
\begin{equation}
\hat b_{n} = \argmax _{\tilde b\in[-1+\delta,1-\delta]}
\hat G_{n} (Z|\tilde b).
        \label{eq:hat_b_G}
\end{equation}
 
To justify Eq. (\ref{eq:hat_b_G}), it is again constructive to consider the limit \(n\to\infty\). First, for any \(\tilde b\in[-1+\delta,1-\delta]\),
the convergence of $\hat{\boldsymbol \psi}(\tilde b)$ and $\hat{\boldsymbol \eta}(\tilde b)$ to ${\boldsymbol \psi}(\tilde b)$ and $\boldsymbol \eta(\tilde b)$, respectively, implies that at any instance \(x\),
\[
\lim_{n\to\infty}\hat g_n(\mathbf{f}(x)|\tilde b)=g(\mathbf{f}(x)|\tilde b) \equiv \log\!\,  \Pr(\mathbf f(x)|{\boldsymbol \psi}(\tilde b),{\boldsymbol \eta}(\tilde b),\tilde b).
\]
Next, since the $n$ instances $x_j$ are i.i.d, by the law of large numbers, combined with the delta method
\begin{equation}
\lim_{n \to \infty} 
\hat G_n(Z|\tilde b) = G(\tilde b) \equiv \mathbb{E}_{(X,Y)}\left[g(\mathbf f(X)|\tilde b)\right].
\label{eq:E_g}
\end{equation}
The following theorem, proven in the appendix, shows that the maxima of $G(\tilde b)$ is obtained at the true class imbalance $\tilde b=b$, and that \(\hat b_n\to b\) in probability. 
\begin{theorem}
Assume all classifier errors are independent, so Eq. (\ref{eq:all_f_independent}) holds. Let \(\epsilon,\delta>0 \) be a-priori known, such that classifiers sensitivities and specificities satisfy $\epsilon < \psi_i,\eta_i<1\!-\!\epsilon$,
and  \(b\in[-1+\delta,1-\delta].\) Then, 
\begin{equation}
        \label{eq:b_max_Eg}
b = \argmax_{\tilde b\in[-1+\delta,1-\delta]} \mathbb{E}_{(X,Y)}\left[g(\mathbf f(X)|\tilde b)\right]
\end{equation}
and as $n\to\infty$ the estimate $\hat b_n$ of Eq. (\ref{eq:hat_b_G}) converges to $b$  in probability. 
\label{Thm:g}
\end{theorem}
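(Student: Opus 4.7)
My plan is to handle the two assertions in order. For the first, that $G(\tilde b)=\mathbb{E}_{(X,Y)}[g(\mathbf f(X)|\tilde b)]$ is uniquely maximised at $\tilde b=b$, I would invoke the Gibbs inequality. Let $q_{\tilde b}(\mathbf f)$ denote the mixture-of-products law on $\{-1,1\}^m$ with class imbalance $\tilde b$ and per-classifier parameters $\boldsymbol\psi(\tilde b),\boldsymbol\eta(\tilde b)$ supplied by Eq.~\eqref{eq:psi_eta}. Under assumption \eqref{eq:all_f_independent}, the true marginal law of $\mathbf f(X)$ coincides with $q_b$, and the plug-in formulas recover the true sensitivities and specificities exactly when $\tilde b=b$; hence
\begin{equation*}
G(b)-G(\tilde b) = \mathrm{KL}\bigl(q_b \,\|\, q_{\tilde b}\bigr)\geq 0,
\end{equation*}
with equality if and only if $q_{\tilde b}=q_b$ as probability mass functions.

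The non-routine step, and the one I expect to be the main obstacle, is uniqueness of the maximiser, i.e.\ showing $q_{\tilde b}=q_b\Rightarrow \tilde b=b$. The family $q_{\tilde b}$ is engineered through Eqs.~\eqref{eq:v}--\eqref{eq:psi_eta} so that its mean vector $\boldsymbol\mu$ and off-diagonal covariance entries (generated by $\mathbf v$) match the truth for every $\tilde b$, so first- and second-order moments cannot identify $b$. The natural fix is to look at the third central moment: repeating the computation behind Lemma~\ref{lem:T} for the parametric law $q_{\tilde b}$ yields a tensor equal to $\alpha(\tilde b)\,\mathbf v\otimes\mathbf v\otimes\mathbf v$ with $\alpha(b)=-2b/\sqrt{1-b^2}$, a function strictly monotone on $(-1,1)$. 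The standing assumption that at least three classifiers have distinct balanced accuracies different from $1/2$ guarantees a triplet $i\neq j\neq k$ with $v_iv_jv_k\neq 0$, so the map $\tilde b\mapsto T^{q_{\tilde b}}$ is injective; combined with the Gibbs step this upgrades to strict $G(\tilde b)<G(b)$ on $[-1+\delta,1-\delta]\setminus\{b\}$.

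For the consistency claim I would then apply the standard argmax-continuity theorem for M-estimators: on the compact interval $[-1+\delta,1-\delta]$, a unique population maximiser together with $\sup_{\tilde b}|\hat G_n(Z|\tilde b)-G(\tilde b)|\to 0$ in probability implies $\hat b_n\to b$ in probability. I would establish the uniform convergence by splitting $\hat G_n=\tilde G_n+(\hat G_n-\tilde G_n)$, where $\tilde G_n$ substitutes the oracle values $\boldsymbol\psi(\tilde b),\boldsymbol\eta(\tilde b)$ for the data plug-ins. The LLN gives pointwise $\tilde G_n(\tilde b)\to G(\tilde b)$, upgraded to uniform convergence via equicontinuity, since the bounds $\epsilon<\psi_i,\eta_i<1-\epsilon$ make $\log q_{\tilde b}(\mathbf f)$ Lipschitz in $\tilde b$ with constant independent of $\mathbf f$. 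The remainder $\hat G_n-\tilde G_n$ vanishes uniformly because $\hat{\boldsymbol\mu}\to\boldsymbol\mu$ and $\hat{\mathbf v}\to\mathbf v$ in probability (Lemma~\ref{lem:psi_eta_est}) enter the parameters through continuous maps bounded on $[-1+\delta,1-\delta]$ and $\log$ is Lipschitz away from zero.

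A secondary technical nuisance affecting both halves of the argument is that for $\tilde b$ far from $b$ the plug-in sensitivities can leave $[0,1]$, in which case $q_{\tilde b}$ is no longer a probability law and $\log q_{\tilde b}$ is ill-defined. I would handle this by restricting the optimisation to the random sub-interval of $[-1+\delta,1-\delta]$ on which every plug-in stays in $[\epsilon/2,1-\epsilon/2]$; Lemma~\ref{lem:psi_eta_est} and continuity imply that this sub-interval contains a fixed neighbourhood of $b$ with probability tending to one, so the global argmax is attained there and the local argmax-continuity argument suffices.
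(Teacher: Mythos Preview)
Your approach matches the paper's in overall structure: both use the Gibbs/cross-entropy inequality for Eq.~\eqref{eq:b_max_Eg} (the paper states it as an auxiliary lemma that $\sum a_i\log c_i$ under $\sum c_i=1$ is maximised at $c_i=a_i$), and both obtain consistency via uniform convergence of $\hat G_n$ to $G$ on the compact interval together with stochastic equicontinuity (the paper appeals directly to Newey's bounded-derivative criterion rather than your oracle/remainder split, but the content is the same).

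The one genuine difference is your identifiability step. The paper's proof stops at ``$G$ attains its maximum at $\tilde b=b$'' and never rules out other maximisers, yet the argmax-continuity theorem it invokes requires a unique population maximiser. Your third-moment argument---that $q_{\tilde b}$ has centred tensor $\alpha(\tilde b)\,\mathbf v^{\otimes 3}$ with $\alpha$ strictly monotone, so $q_{\tilde b}=q_b\Rightarrow\tilde b=b$---fills exactly this gap and is a real addition to the paper's proof. Similarly, your treatment of the boundary nuisance (plug-ins leaving $[0,1]$) is more explicit than the paper's one-line remark that the estimates ``can be restricted to $\epsilon<\hat\psi_i,\hat\eta_i<1-\epsilon$.''
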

Note that since \(\hat b_n\) is the maximizer of a restricted likelihood, its convergence to \(b\) is not a direct consequence of the consistency of ML estimators. Instead, what is needed is uniform convergence in probability of \(\hat G_n(\tilde b)\) to $G(\tilde b)$, see \cite{Newey_1991} and appendix.
Also note that even though \(\hat G_n(\tilde b)\) is not necessarily concave, finding its global maxima requires optimization of a smooth function of only one variable. 

Algorithm 2 summarizes the method to estimate \(b\) by the restricted-likelihood method.
\begin{algorithm}[t]
  \caption{Estimating the class imbalance using the restricted likelihood functional}\label{euclid}
  \begin{algorithmic}[1]
      \State{Estimate the mean values $\{\hat \mu_i\}_{i=1}^m$, 
      the covariance matrix $\hat R$, and the vector $\mathbf{\hat{v}}$.}  
      \For{$\tilde b \in (-1+\delta, 1-\delta)$}
      \State\parbox[t]{\dimexpr\linewidth-\algorithmicindent}{Estimate  $\hat{\boldsymbol{\psi}}(\tilde b)$ and $\hat{\boldsymbol{\eta}}(\tilde b)$ via Eq. \eqref{eq:psi_eta_est}.}
      \State{Calculate $\hat G_n(Z|\tilde b)$ by Eqs. \eqref{eq:g_def} and \eqref{eq:G_def}}.
      \EndFor
      \State{Estimate $b$ by Eq. (\ref{eq:hat_b_G}).}
  \end{algorithmic}
\end{algorithm}
This algorithm scans possible values of \(\tilde b\), where each evaluation of $\hat G_n$ requires \(O(mn)\) operations. Since \(\hat g_{n}\) and consequently $\hat G_{n}$ are smooth functions of \(\tilde b\)
in  $(-1 + \delta,1-\delta)$,  
the finite grid of values of \(\tilde b\) can be of size polynomial in \(n\) and the method is computationally efficient. 
\label{}  

\section{The multi-class case}
        \label{sec:multiclass}

We now consider the multi-class case, with $K>2$ classes. Here we are given the predictions of \(m\) classifiers, \(f_i:\mathcal X\to\mathcal Y$, where  $\mathcal{Y}=\{1,\ldots,K\}\). Instead of the class imbalance \(b\), we now have a vector of \(K\) class probabilities $p_k=\Pr(Y=k)$. Similarly, instead of specificity and sensitivity, now each classifier is characterized by a \(K\times K\) confusion matrix \(\psi^i\)
\[ 
\psi_{kk'}^i = \Pr(f_i(X)=k|Y=k') \quad \quad k,k'\in\mathcal Y.
\]

In analogy to  Section \ref{sec:setup}, given only an $m \times n$ matrix of predictions, with elements $f_i(x_j) \in \{1\ldots K\}$,  the problem is to estimate the confusion matrices $\psi^i $ of all classifiers and the class probabilities \(p_k\).

As in the binary case, we make an  assumption regarding the mutual independence of errors made by different classifiers. The precise independence assumption (pairs, triplets or the full set of classifiers) depends on the method employed.

By a simple reduction to the binary case, we now present a partial solution to this problem. We develop a method to consistently estimate the class probabilities \(p_k\) and the diagonals of the confusion matrices, namely the probabilities $\Pr(f_i(X)=k|Y=k)$. However, we prove that even if the class probabilities are a-priori known, estimating all entries of the \(m\) confusion matrices is not possible via this binary reduction.

To this end, we build upon the methods developed in Sections 
 \ref{sec:psi_eta} and \ref{sec:estimate_b} for binary problems. Consider a split of the group $\mathcal Y=\{1\ldots K\}$ into two non-empty disjoint subsets, \(\mathcal Y=\mathcal A\cup(\mathcal Y\setminus \mathcal A)\),
where $\mathcal{A}\subset \mathcal Y$ is a non trivial subset of 
$\mathcal Y$, with $0<|\mathcal A|<K$. Next, define the binary classifiers $\{ f_i^\mathcal A\}_{i=1}^m$:
\[ f_i^\mathcal A(X) = \left\{ 
\begin{array}{rr}
1 \quad f_i(X) \in \mathcal A \\
-1  \quad f_i(X) \not\in \mathcal A 
\end{array} \right.
\]
Using one of the algorithms described in Section \ref{sec:estimate_b}, 
we estimate the probability  of the group $\mathcal{A}$ 
\[
p^{\mathcal{A}} = \Pr(Y \in \mathcal{A})=\sum_{k \in \mathcal{A}} p_k 
\]
and the sensitivity of each classifier $f_i^\mathcal{A}$ 
by Eq. \eqref{eq:psi_eta_est}. 

In particular, when $\mathcal A =\{k\}$, $p^\mathcal A=p_k$ and $\psi_i^\mathcal A=\psi^i_{kk}$. Hence, by considering all 1-vs.-all splits, 
we consistently and computationally efficiently estimate all class probabilities \(p_k\), and all diagonal entries \(\psi^i_{kk}\).

The following theorem, proven in the appendix, states a negative result, that estimating the full confusion matrix is not possible by this binary reduction method.
\begin{theorem}
Let  $\mu_\mathcal A^i=\mathbb{E}[f_i^\mathcal A]$ and let $R_\mathcal A$ be the covariance matrix of the classifiers $\{f_i^\mathcal A\}_{i=1}^m$.
The inverse problem of estimating the $m$ confusions matrices $\psi^i$,
from the values of  $\{\mu_\mathcal A^i\}_{i=1}^m$ and $R_\mathcal A$ for all possible subsets \(\mathcal A\) of  $\mathcal Y=\{1\ldots K\}$, is in general ill posed with multiple solutions.
\label{Thm:multi}
\end{theorem}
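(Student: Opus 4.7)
The plan is to exhibit an explicit continuous family of alternative confusion-matrix tuples $\{\tilde{\psi}^i\}$ reproducing the same $\mu_\mathcal{A}^i$ and $R_\mathcal{A}$ for every $\mathcal{A}\subset\mathcal{Y}$, thereby demonstrating non-identifiability. The key object is the joint-probability matrix $J^i_{kk'} := \psi^i_{kk'}\,p_{k'} = \Pr(f_i=k,Y=k')$; as I will show, the binary reductions can only access the symmetric part of each $J^i$, so antisymmetric perturbations of $J^i$ are invisible to the algorithm.

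First I would characterize precisely what information the binary reductions expose. Applying the methods of Sections \ref{sec:psi_eta}--\ref{sec:estimate_b} to the reduction indexed by $\mathcal{A}$ yields the triple $(p^\mathcal{A},\psi_i^\mathcal{A},\eta_i^\mathcal{A})$. Since $p^\mathcal{A}\psi_i^\mathcal{A} = T^i(\mathcal{A}) := \sum_{k,k'\in\mathcal{A}} J^i_{kk'}$ and $\eta_i^\mathcal{A}=\psi_i^{\mathcal{A}^c}$, the full collection reduces to the values $p^\mathcal{A}$ and $T^i(\mathcal{A})$ for all subsets $\mathcal{A}$. Inclusion--exclusion then shows: singletons $\mathcal{A}=\{k\}$ recover every $p_k$ and every diagonal $J^i_{kk}$; doubletons $\mathcal{A}=\{k,k'\}$ recover the symmetric off-diagonal sum $J^i_{kk'}+J^i_{k'k}$; and larger subsets contribute nothing new, since $T^i(\mathcal{A})$ for $|\mathcal{A}|\ge 3$ is a linear combination of the singleton and doubleton data. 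Hence the extractable information is precisely $\{p_k\}$, $\{J^i_{kk}\}$, and $\{J^i_{kk'}+J^i_{k'k}\}_{k<k'}$.

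Next I would construct the ambiguity. For $K\ge 3$, let $A^i\in\mathbb{R}^{K\times K}$ be any antisymmetric matrix whose column sums all vanish, and define $\tilde{\psi}^i_{kk'} = \psi^i_{kk'} + A^i_{kk'}/p_{k'}$. Four invariance checks make the perturbation undetectable: (a) zero column sums of $A^i$ keep every column of $\tilde{\psi}^i$ summing to one, so $\tilde{\psi}^i$ remains column-stochastic; (b) antisymmetry forces $A^i_{kk}=0$, preserving each diagonal $J^i_{kk}$; (c) $A^i_{kk'}+A^i_{k'k}=0$ preserves every pair-sum; (d) for any $\mathcal{A}$, the restriction of $A^i$ to the block $\mathcal{A}\times\mathcal{A}$ is itself antisymmetric, so its total sum vanishes and $T^i(\mathcal{A})$ is unchanged. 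A dimension count ($\binom{K}{2}$ antisymmetric parameters minus $K-1$ independent column-sum constraints) gives a space of dimension $(K-1)(K-2)/2 \ge 1$; for $K=3$ an explicit one-parameter family is $A^i_{12}=A^i_{23}=-A^i_{13}=t$. Provided $\psi^i$ lies in the interior of the stochastic simplex and $\|A^i\|$ is small enough, $\tilde{\psi}^i$ remains a valid confusion matrix, giving a genuine continuum of distinct solutions.

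The main obstacle is the bookkeeping in the previous paragraph: one has to verify that a single constraint set -- antisymmetry together with zero column sums -- simultaneously enforces (a), (b), (c) and (d), so that the perturbation survives every aggregate used by the binary reduction. Each individual check is elementary, but they must be aligned carefully since changing any one of them (e.g.\ dropping the column-sum condition) would already break stochasticity. Once these are in place, the existence of the $(K-1)(K-2)/2$-dimensional perturbation space immediately implies that the inverse problem admits multiple solutions, proving the theorem.
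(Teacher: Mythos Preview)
Your proof is correct and, at its core, uses the same construction as the paper. The paper assumes uniform class probabilities $p_k=1/K$ and perturbs a single confusion matrix $\psi^1$ on three fixed indices $j,k,l$ by the cyclic antisymmetric pattern $(\psi^1_{jk},\psi^1_{kl},\psi^1_{lj})\to+\Delta$, $(\psi^1_{kj},\psi^1_{lk},\psi^1_{jl})\to-\Delta$; under uniform $p_k$ this is exactly your $K=3$ example $A_{12}=A_{23}=-A_{13}=t$. The paper then checks directly that column sums, row sums, and the block sums $\psi^1_\mathcal{A},\eta^1_\mathcal{A}$ are all preserved, hence $\mu_\mathcal{A}$ and $R_\mathcal{A}$ are unchanged.

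Your route is more general and more transparent: by passing to the joint-probability matrices $J^i$ you identify \emph{why} the construction works (the binary reductions see only the symmetric part of $J^i$), you handle arbitrary class probabilities rather than just the uniform case, and you compute the full dimension $(K-1)(K-2)/2$ of the ambiguity. One minor expository point: the sentence ``Applying the methods of Sections~\ref{sec:psi_eta}--\ref{sec:estimate_b} \ldots\ yields the triple $(p^\mathcal{A},\psi_i^\mathcal{A},\eta_i^\mathcal{A})$'' overstates what $\mu_\mathcal{A}$ and $R_\mathcal{A}$ alone determine (Section~\ref{sec:estimate_b} needs tensor or likelihood data to pin down $p^\mathcal{A}$). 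This does not affect your argument, since for non-identifiability you only need the forward direction: the perturbation preserves $p^\mathcal{A},\psi_i^\mathcal{A},\eta_i^\mathcal{A}$, and $\mu_\mathcal{A},R_\mathcal{A}$ are functions of these.
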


Theorem \ref{Thm:multi} implies that in order to completely estimate the confusion matrices in a multiclass problem, it is necessary to use higher-order dependencies such as tensors or even the full likelihood. Indeed, both  \cite{Zhang_2014} and \cite{Jain_2014} derived such methods based on three-dimensional tensors.

While beyond the scope of this paper, we remark that combining our simpler method with these  tensor-based approaches might produce more accurate algorithms for the multiclass case. 


\section{Experiments}
        \label{sec:simlations}

\subsection{Artificial Data }

First, we demonstrate the performance  of the two  class imbalance estimators on artificial binary data. In the following we constructed an ensemble of $m=10$ classifiers that make independent errors and thus satisfy Eq. (\ref{eq:all_f_independent}). Their sensitivities and specificities were chosen uniformly  at random from the interval $[0.5,0.8]$. Thus, assumption (iii) on the balanced accuracies $\boldsymbol{\pi}$ holds.
The vector of true labels $\mathbf y\in\{\pm 1 \}^n$ was randomly generated according to the class imbalance $b$, and the data matrix $Z$ was randomly generated according to $\mathbf y,\boldsymbol{\psi}$, and $\boldsymbol{\eta}$.
 
Fig. \ref{Fig:estimate_b} presents the accuracy (mean and standard deviation) of the estimates $\hat b$ of the class imbalance, achieved by the two different algorithms of Sections \ref{sec:algoT} and \ref{sec:algo_likelihood},
vs. the number of unlabeled instances \(n\), for several values of the class imbalance, $b=0,0.3,0.6$. As expected, the accuracy of both methods improves with the number of instances. Fig. \ref{Fig:MSE_b}
shows the  mean squared error (MSE) $\mathbb{E}[(\hat b-b)^2]$ vs. the number of samples $n$, on a log-log scale. The linear line with slope $\approx-1$ shows that empirically $\hat b_n=b+\mathcal O_P(1/\sqrt{n})$, in accordance to Lemma \ref{lemma:b_hat_tensor}.
In addition, on simulated data, the restricted likelihood estimator is more accurate than the tensor-based estimator.    

\begin{figure}[!t]
  \centering
  \begin{subfigure}{.4\textwidth}
  \centering
    \includegraphics[width=\textwidth]{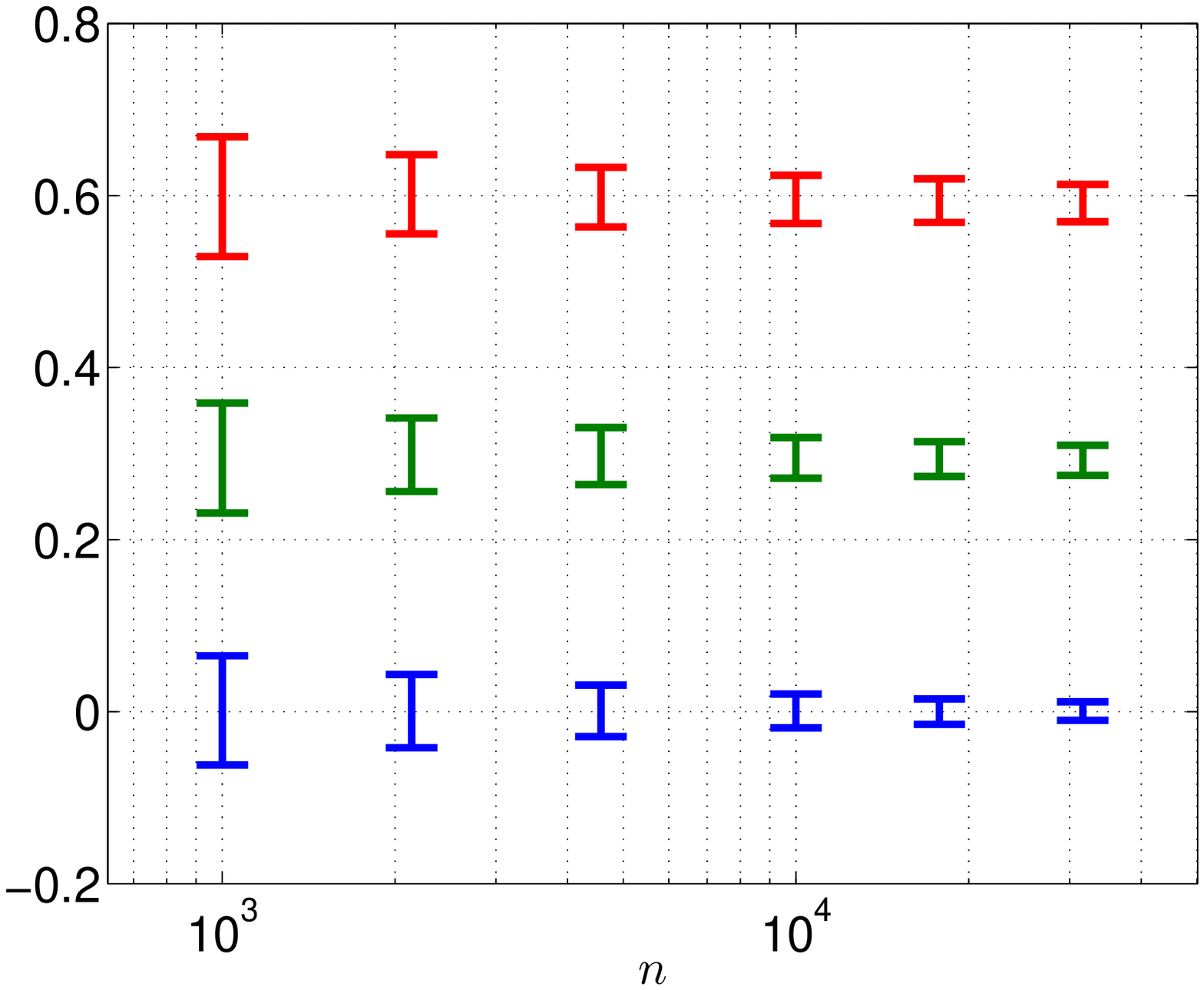}
  \caption{Estimating $b$ via the 3-D tensor \(T\).}
\label{Fig:N_A}
\end{subfigure}
\begin{subfigure}{.4\textwidth}
  \centering
    \includegraphics[width=\textwidth]{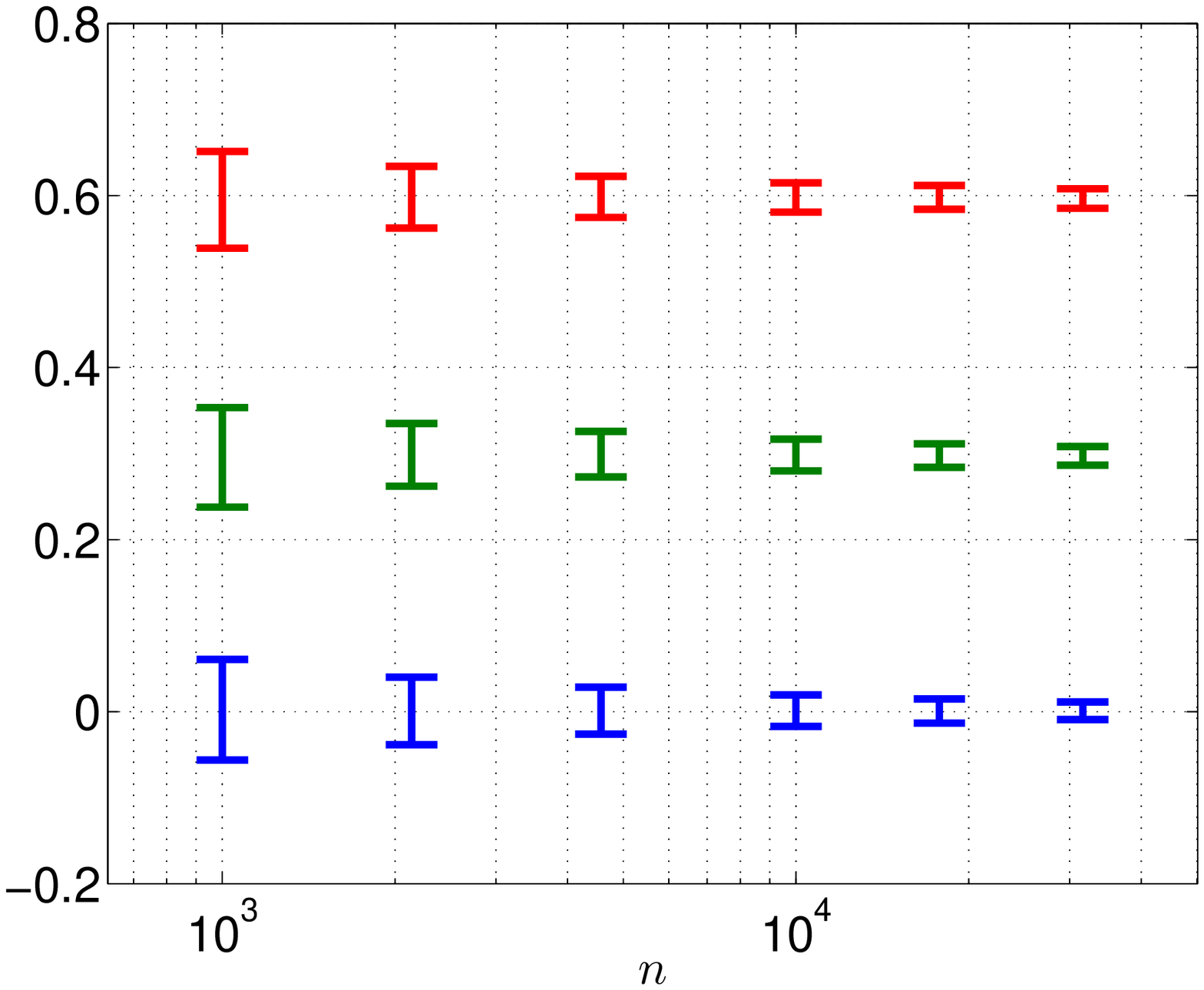}  
  \caption{Estimating $b$ via the restricted likelihood  $\hat G_n$}
\label{Fig:N_B}
\end{subfigure}
\caption{Mean and variance of the tensor-based and likelihood-based class imbalance estimators vs. number of instances $n$, for several values of $b$.}
\label{Fig:estimate_b}
\end{figure}
\begin{figure}
  \centering
  \includegraphics[width=0.35\textwidth]{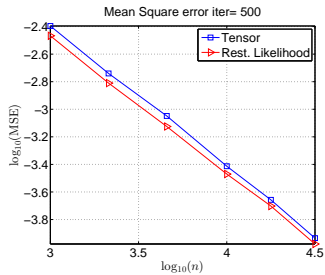}
  \caption{The MSE of the two class imbalance estimators vs. number of samples on a log-log scale.}
\label{Fig:MSE_b}
\end{figure}

\subsection{Real data}
We applied  our algorithms on various binary and multi-class problems using a total of 5 datasets: 4 datasets from the UCI repository \citep{UCI} and the MNIST\ data. Our ensemble consisted of \(m=10\) classification methods implemented in the software package Weka \citep{Weka}. Due to page limits, we present here results only on the 'magic' dataset. Further details on the different datasets, classifiers and additional results appear in the appendix.    

The magic data contains $19,000$ instances with 11 attributes. The task is to distinguish each instance as either background or high energy gamma rays.
Each of the \(m=10\) classifiers was trained on its own  randomly chosen set of  200 instances.  The classifiers were then applied to the whole dataset, thus providing  the $m \times n $ prediction matrix. We compared the results of 4 different unsupervised ensemble methods:
(i) Majority voting; 
(ii) SML of  \cite{Parisi_2014};  
(iii) i-SML as described in section \ref{sec:estimate_b}; and   
(iv) Oracle ML: the MLE formula (\ref{eq:y_ML})\ with the values of $\boldsymbol \psi$ and   $\boldsymbol \eta$, estimated from the full dataset with its labels.

To assess the stability of the different methods, for each dataset we repeated the above simulation $30$ times, each realization with different randomly chosen training sets. Fig. \ref{Fig:compare_real_bar} shows the mean and standard deviation of the balanced accuracy $\pi$ achieved by the four methods on the  'magic' dataset. It shows that on average, i-SML improves upon the SML by approximately $2\%$, and both are significantly better than majority voting.
 Fig. \ref{Fig:compare_real_graph} displays the error rates 
$1-\pi_{\mbox{\tiny\ i-SML}}$ vs. $1-\pi_{\mbox{\tiny SML}}$ for all $30$ realizations.  As all points are below the diagonal,  the improvement over SML\ was consistent in all 30 simulation runs.
As shown in the appendix, similar results, and in particular the improvement of i-SML over SML, were observed also in  all 4 other datasets. 

\section{Summary and Discussion}

In this paper we presented a simple spectral-based approach to estimate, in an unsupervised manner, the accuracies of multiple classifiers, mainly in the binary case. This, in turn, resulted in a novel unsupervised spectral ensemble learner, denoted i-SML. The empirical results on several real data sets attest to its competitive performance in practical situations where clearly the underlying idealized assumptions that all classifiers make independent errors do not hold exactly. 

There are several interesting directions to  extend this work. One possible direction is to relax the strict assumptions of independence of classifier errors across all instances, for example by introducing the concept of instance difficulty. A\ second interesting direction is the construction of novel semi-supervised ensemble learners, when one is given not only the predictions of \(m\) classifiers on a large unlabeled set of instances, but also their predictions on a small set of labeled ones.

\begin{figure}[!t]
  \centering
  \begin{subfigure}{.4\textwidth}
  \centering
  \includegraphics[width=\textwidth]{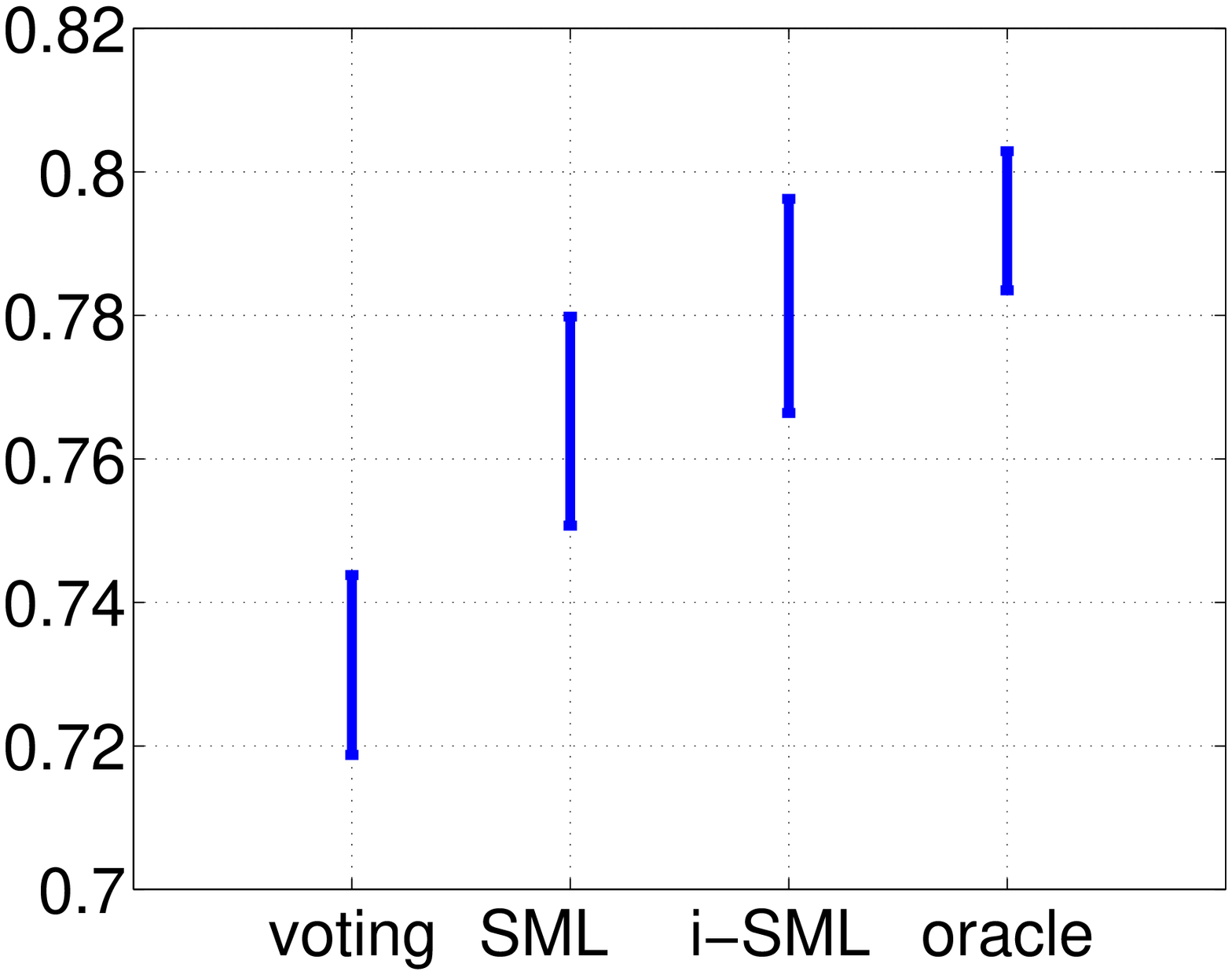}
  \caption{The  balanced accuracies of 4 unsupervised ensemble methods on the magic dataset.}
\label{Fig:compare_real_bar}
\end{subfigure}
\begin{subfigure}{.4\textwidth}
  \centering
  \includegraphics[width=\textwidth]{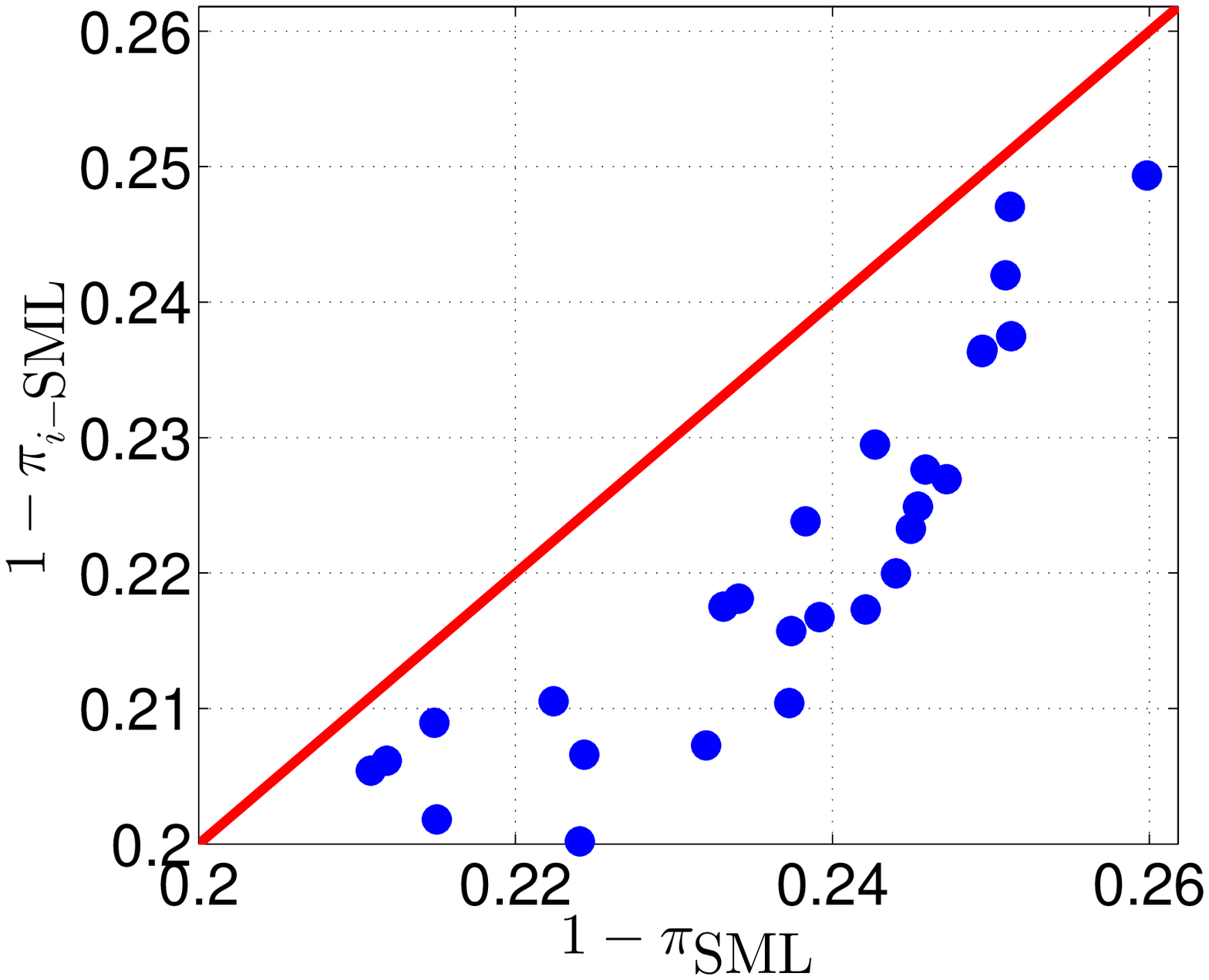}
  \caption{The empirical test error $(1-\pi_{\mbox{i-\tiny{SML}}})$ 
  vs. $(1-\pi_{\mbox{\tiny{SML}}})$ for 30 random realizations.}
\label{Fig:compare_real_graph}
\end{subfigure}
\caption{Comparing 4 unsupervised ensemble learning algorithms, based on  $m=10$ classifiers.}
\label{Fig:compare_real}
\end{figure}

\newpage
\bibliographystyle{plainnat}
\bibliography{bib_unsupervised_classification}

\newpage
\appendix
\section{Estimation of $\psi$ and $\eta$}

\begin{proof}[Proof of Lemma \ref{lem:psi_eta}] We first recall the following formula, derived in \cite{Parisi_2014}, for the vector \(\boldsymbol \mu\) containing the mean values of the \(m\)
classifiers,
\begin{equation}
\boldsymbol{\mu} = 2\boldsymbol{\delta}+b(2\boldsymbol{\pi} -1)
\label{Aeq:mu}
\end{equation}
where $\boldsymbol{\delta}=(\delta_1,\ldots,\delta_m)$ denotes the vector containing half the difference between $\boldsymbol{\psi}$ and $\boldsymbol{\eta}$,
\begin{equation}
\boldsymbol{\delta}=\frac{\boldsymbol \psi-\boldsymbol \eta}{2}.
        \label{eq:delta_def}
\end{equation}
Next, recall from Lemma \ref{lem:v} (also proven in \cite{Parisi_2014}) that the off-diagonal elements of the covariance matrix $R$ correspond to a rank-1 matrix $\mathbf v \mathbf v^T$ where,
\begin{equation}
\mathbf v = \sqrt{1-b^2}(2\boldsymbol{\pi}-1).
\label{Aeq:v}
\end{equation}
Inverting the relation between \(\mathbf v\) and $\boldsymbol \pi$ in Eq. \eqref{Aeq:v} gives
\begin{equation}
\boldsymbol{\pi}  = \frac{1}{2}\left( \frac{\mathbf v}{\sqrt{1-b^2}}+1\right).
\label{Aeq:Pi}
\end{equation}
Plugging \eqref{Aeq:Pi} into \eqref{Aeq:mu}, we obtain  the following expression for the vector $\boldsymbol \delta$, in terms of \(\mathbf v\) and $\boldsymbol \mu$,
\begin{equation}
\boldsymbol \delta= \frac{1}{2}\left(\boldsymbol \mu-b \frac{\mathbf v}{\sqrt{1-b^2}}\right).
\label{Aeq:delta}
\end{equation}
Combining \eqref{eq:delta_def}, \eqref{Aeq:Pi} and \eqref{Aeq:delta} we obtain $\boldsymbol \psi(b)$ and $\boldsymbol \eta(b)$,
\begin{eqnarray}
\boldsymbol \psi = \boldsymbol \pi + \boldsymbol \delta = \frac{1}{2}\left(1 + \boldsymbol \mu + \mathbf v \sqrt{\frac{1-b}{1+b}}\right),\nonumber\\
\boldsymbol \eta = \boldsymbol \pi - \boldsymbol \delta = \frac{1}{2}\left(1 - \boldsymbol \mu + \mathbf v \sqrt{\frac{1+b}{1-b}}\right).\nonumber
\end{eqnarray}
\end{proof}

\section{Statistical Properties of $\psi$ and $\eta$}

\begin{proof}[Proof of Lemma \ref{lem:psi_eta_est}]
Eq. \eqref{eq:psi_eta_est} provides an explicit expression for $\hat{\boldsymbol{\psi}}$ and $\hat{\boldsymbol{\eta}}$ as a function of the estimates $\hat{\mathbf{v}}$ and $\hat{\boldsymbol{\mu}}$. The empirical mean $\hat{\boldsymbol \mu}$ is clearly not only unbiased, but by the law of large numbers also a consistent estimate of $\boldsymbol \mu$, and its error indeed satisfies
\[
\hat{\boldsymbol{\mu}} = \boldsymbol{\mu} + \mathcal{O}_P\left(\frac{1}{\sqrt{n}}\right).
\]
The estimate $\hat{\mathbf v}$, computed by one of the methods described in \cite{Parisi_2014} may be biased, but as proven there is still consistent, and assuming at least three classifiers are different than random (in particular, implying that the eigenvalue of the rank one matrix is non-zero), its error also decreases as $\mathcal{O}_P\left(\frac{1}{\sqrt{n}}\right)$,
\[
  \hat{\mathbf{v}} = \mathbf{v} + \mathcal{O}_P\left(\frac{1}{\sqrt{n}}\right).
\]
Given the exact value of the class imbalance \(b\), since the dependency of $\hat{\boldsymbol{\psi}}$ and $\hat{\boldsymbol{\eta}}$ on
$\hat{\mathbf{v}}$ and $\hat{\boldsymbol{\mu}}$ is linear,
it follows that both are also consistent and that their estimation error is  $\mathcal{O}_P\left(\frac{1}{\sqrt{n}}\right)$.
\end{proof}

\section{ The joint covariance tensor $T$}
\begin{proof}[Proof of Lemma \ref{lem:T}]
To simplify the proof, we first introduce the following linear transformation to the original classifiers,
\[
\tilde f_i(x)=\frac{f_i(x)+1}{2}.
\]
Note, that the output space $\mathcal{Y}$ of the new classifiers is $\{0,1\}$, with class probabilities equal to  $1-p$ and $p$ respectively.
Let us also denote by $\tilde \eta_i$ and $\tilde \psi_{i}$ the following probabilities,
\[
\tilde \eta_i = \Pr(\tilde f_i(x)=1|Y=0), 
\tilde \psi_i = \Pr(\tilde f_i(x)=1|Y=1) .
\]
Note that $\tilde \eta_i$ is not the specificity of classifier $i$, but rather its complement, $\tilde \eta_i = 1-\eta_i$.

The mean of  classifier $\tilde f_i$, denoted  $\tilde \mu_i$, is given by
\begin{equation}
        \label{eq:tilde_mu}
\tilde \mu_i = \mathbb{E}[\tilde f_i(X))] = \Pr(\tilde f_i(X)=1) =  p\tilde \psi_i +(1-p)\tilde \eta_i
\end{equation}
Next, let us calculate the (un-centered)\ covariance between two different classifiers $i  \neq j$,
\begin{multline}
\mathbb{E}[\tilde f_i(X)\tilde f_j(X)] = \Pr(\tilde f_i(X) = 1,\tilde f_j(X) = 1)  \\ 
= p\tilde \psi_i\tilde \psi_j+(1-p)\tilde \eta_i \tilde \eta_j 
\label{Aeq:mu_calc}
\end{multline}
Last, the joint covariance between 3 different classifiers $i \neq j \neq k$ is given by
\begin{eqnarray}
\mathbb{E}[\tilde f_i(X)\tilde f_j(X) \tilde f_k(X)]\!\!\! &=&\! \!\! 
 \Pr(\tilde f_i(X) \!=\!\tilde f_j(X)\!=\!\tilde f_k(X)\!=\!1)\nonumber \\ & =&\!\!\! p\tilde \psi_i\tilde \psi_j\tilde \psi_k+(1-p)\tilde \eta_i \tilde \eta_j \tilde \eta_k 
\label{Aeq:3dCorr}
\end{eqnarray}

The first step in calculating the joint covariance tensor of the original classifiers is to note that \(f_i=2\tilde f_i-1\) and $\mu_i = 2\tilde \mu_i-1$. Hence,
\[
T_{ijk} = \mathbb{E}[(f_i(X)-\mu_i)(f_j(X)-\mu_j)(f_k(X)-\mu_k)] = 8 \tilde T_{ijk}
\]
where
\[
\tilde T_{ijk} = \mathbb{E}[(\tilde f_i(X)-\tilde\mu_i)(\tilde f_j(X)-\tilde\mu_j)(\tilde f_k(X)-\tilde\mu_k)].
\]
Upon opening the brackets, the latter can be equivalently written as
\begin{multline}
\tilde T_{ijk} =\mathbb{E} \left[ \tilde f_i(X)\tilde f_j(X)\tilde f_k(X)\right]\\
        -  \tilde \mu_i \mathbb{E}  \left[ \tilde f_j(X) \tilde f_k(X)\right]
         -  \tilde \mu_j \mathbb{E}  \left[ \tilde f_i(X) \tilde f_k(X)\right]
                        \\
     - \tilde\mu_k \mathbb{E}  \left[ \tilde f_i(X) \tilde f_j(X)\right] +2\tilde \mu_i \tilde \mu_j \tilde \mu_k
\label{eq:Tensor_def}
\end{multline}
Plugging  \eqref{eq:tilde_mu},\eqref{Aeq:mu_calc} and \eqref{Aeq:3dCorr} into \eqref{eq:Tensor_def} we get,
\begin{multline}
\tilde T_{ijk}= p\tilde \psi_i\tilde \psi_j\tilde \psi_k
        + (1-p)\tilde \eta_i \tilde\eta_k \tilde \eta_j -\\
        \left(p\tilde \psi_i +(1-p) \tilde \eta_i \right)
        \left(p\tilde\psi_j\tilde\psi_k+(1-p)\tilde \eta_j \tilde \eta_k\right)-
         \\
        \left(p\tilde \psi_j +(1-p) \tilde \eta_j \right)\!
        \left( p\tilde \psi_k\tilde \psi_i +(1-p)\tilde \eta_k \tilde \eta_i\right)-\\
\left(p\tilde \psi_k +(1-p) \tilde \eta_k \right)\!
\left( p\tilde \psi_i\tilde \psi_j +(1-p)\tilde \eta_i \tilde \eta_j \right)+
         \\
         2 \left(p\tilde \psi_i +(1-p) \tilde \eta_i \right)\left( p\tilde \psi_j +(1-p) \tilde \eta_j\right)
\left( p\tilde \psi_k +(1-p) \tilde \eta_k\right) \notag
\end{multline}
Opening the brackets and collecting similar terms yields
\begin{multline}
\tilde T_{ijk}  =  (p-3p^2+2p^3)\tilde \psi_i\tilde \psi_j\tilde \psi_k + \\
 \left(2p^2(1-p)-p(1-p) \right) \left( \tilde \eta_i\tilde \psi_j\tilde \psi_k +\tilde \eta_j\tilde \psi_k\tilde \psi_i + \tilde \eta_k\tilde \psi_i\tilde \psi_j \right) +
\\
\left(2p(1-p)^2-p(1-p) \right) \left( \tilde \eta_i\tilde \eta_j\tilde \psi_k +\tilde \eta_j\tilde \eta_k\tilde \psi_i + \tilde \eta_k\tilde \eta_i\tilde \psi_j \right)+ \\
\left( (1-p) -3(1-p)^2+2(1-p)^3 \right) \tilde \eta_i\tilde \eta_k\tilde \eta_j. \notag
\end{multline}
Note that all polynomials in \(p\) in the above expression are equal to \(\pm p(1-p)(1-2p)\). Hence,
\begin{multline}
\tilde T_{ijk} \!=\!
p(1-p)(1-2p)(\tilde \psi_i\tilde \psi_j\tilde \psi_k - \tilde \eta_i\tilde \psi_j\tilde \psi_k - \tilde \eta_j\tilde \psi_k\tilde \psi_i - \\
\tilde \eta_k\tilde \psi_i\tilde \psi_j+\tilde \eta_i\tilde \eta_j\tilde \psi_k +\tilde \eta_j\tilde \eta_k\tilde \psi_i + \tilde \eta_k\tilde \eta_i\tilde \psi_j -\tilde \eta_i\tilde \eta_k\tilde \eta_j )        
\end{multline}
Finally, replacing $\tilde \psi_i=\psi_i$, $\tilde \eta_i=1-\eta_i$ and
$p=\frac{1+b}{2}$, yields
 \begin{multline}
 T_{ijk} = -2b(1-b^2)(\psi_i +\eta_i-1)(\psi_j +\eta_j-1)(\psi_k +\eta_k-1)
  \\
  = -2b(1-b^2)(2\pi_i-1)(2\pi_j-1)(2\pi_k-1). \notag
 \end{multline}
 \end{proof}

\begin{proof}[Proof of Lemma \ref{lemma:b_hat_tensor}] To prove that $\hat b_n$ is consistent with an asymptotic error $\mathcal O_P(1/\sqrt{n})$, we first recall that according to \cite{Parisi_2014}, it follows that
\[
  \hat{\mathbf{v}} = \mathbf{v} + \mathcal{O}_P\left(\frac{1}{\sqrt{n}}\right).
\]
By its definition, each entry of $\hat T_{ijk}$ also incurs an error of $O_P(1/\sqrt{n})$.
Hence, by the delta method, the estimate $\hat\alpha$ of Eq. (\ref{eq:hat_alpha}), being a least squares minimizer, also satisfies
\[
\hat \alpha = \alpha + \mathcal O_P(1/\sqrt{n}). 
\] 
Since $\hat b_n$ is found by the smooth relation of Eq. (\ref{eq:b}), again by the delta method, $\hat b_n = b+\mathcal O_P(1/\sqrt{n})$. Finally, the fact that the corresponding
estimates $\hat \psi_i$ and $\hat\eta_i$ also have errors $\mathcal O_P(1/\sqrt{n})$ follows by standard application of the delta method to 
Eq. (\ref{eq:psi_eta_est}), where all quantities $\hat{\boldsymbol \mu},
\hat{\mathbf v}$ and $\hat b$ have errors $\mathcal O_P(1/\sqrt{n})$. 
\end{proof}

\paragraph{Dependence of estimated parameters on number of classifiers and their accuracies.} Beyond the fact that $\hat\alpha$ and consequently $\hat b_n,\hat{\boldsymbol \psi},\hat{\boldsymbol \eta}\) are all $\mathcal O(1/\sqrt{n})$ consistent, it is of interest to study the dependence of these estimates on the number of classifiers and their accuracies. To this end, we first prove the following simple result.

\begin{lemma} Let $\hat\alpha$ be the estimate of $\alpha$ in Eq. (\ref{eq:hat_alpha}). Then asymptotically as $n\to\infty$, its estimation error is given by 
\begin{equation}
\hat \alpha- \alpha = 
\frac{\langle \hat T-T,\mathbf v^{\otimes 3}\rangle }{\langle \mathbf v^{\otimes 3},{\bf v}^{\otimes 3}\rangle} -
\alpha\frac{\langle \hat{\bf v}^{\otimes 3}-\mathbf v^{\otimes 3},{\bf v}^{\otimes 3}\rangle}{\langle \mathbf v^{\otimes 3},{\bf v}^{\otimes 3}\rangle}
+O_P\left(\frac1{{n}}\right)
        \label{eq:err_alpha}
\end{equation}
where ${\mathbf v}^{\otimes 3}=\mathbf v\otimes\mathbf v\otimes \mathbf v$, and for any two tensors $T,S$, $\langle T,S\rangle = \sum_{i<j<k} T_{ijk}S_{ijk}$. 
\end{lemma}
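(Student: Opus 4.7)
The plan is to solve the one-dimensional least squares problem in Eq. (\ref{eq:hat_alpha}) in closed form and then Taylor-expand around the population quantities to first order. Setting the derivative with respect to $\alpha$ to zero immediately yields $\hat\alpha = \langle \hat T, \hat{\mathbf v}^{\otimes 3}\rangle / \langle \hat{\mathbf v}^{\otimes 3}, \hat{\mathbf v}^{\otimes 3}\rangle$. The population analogue uses Lemma \ref{lem:T} together with Eq. (\ref{eq:T_alpha}): since $T_{ijk} = \alpha v_i v_j v_k$ for all distinct $i,j,k$, summing against $\mathbf v^{\otimes 3}$ over triples $i<j<k$ gives $\alpha = \langle T, \mathbf v^{\otimes 3}\rangle / \langle \mathbf v^{\otimes 3}, \mathbf v^{\otimes 3}\rangle$.

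Next I would introduce the perturbations $\Delta T = \hat T - T$ and $\Delta_{\mathbf v} = \hat{\mathbf v}^{\otimes 3} - \mathbf v^{\otimes 3}$. Entrywise, $\Delta T = \mathcal O_P(1/\sqrt{n})$ because each $\hat T_{ijk}$ is an empirical third moment of bounded random variables, and entrywise $\Delta_{\mathbf v} = \mathcal O_P(1/\sqrt{n})$ because $\hat{\mathbf v} = \mathbf v + \mathcal O_P(1/\sqrt{n})$ (as recalled in the proof of Lemma \ref{lem:psi_eta_est}, following \cite{Parisi_2014}) and the outer-product map is smooth. Since the number of classifiers $m$ is fixed, every sum involved has $O(1)$ terms, so by Cauchy--Schwarz any bilinear pairing of two perturbations is $\mathcal O_P(1/n)$.

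I would then bilinearly expand the numerator as $\langle \hat T, \hat{\mathbf v}^{\otimes 3}\rangle = \langle T, \mathbf v^{\otimes 3}\rangle + \langle \Delta T, \mathbf v^{\otimes 3}\rangle + \langle T, \Delta_{\mathbf v}\rangle + \mathcal O_P(1/n)$, and replace $\langle T, \Delta_{\mathbf v}\rangle$ by $\alpha\, \langle \mathbf v^{\otimes 3}, \Delta_{\mathbf v}\rangle$ using the rank-one identity of Lemma \ref{lem:T}. The denominator similarly expands as $\langle \mathbf v^{\otimes 3}, \mathbf v^{\otimes 3}\rangle + 2 \langle \mathbf v^{\otimes 3}, \Delta_{\mathbf v}\rangle + \mathcal O_P(1/n)$. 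Applying the first-order geometric series $(1+x)^{-1} = 1 - x + O(x^2)$ to the denominator and collecting terms, the contributions proportional to $\langle \mathbf v^{\otimes 3}, \Delta_{\mathbf v}\rangle / \langle \mathbf v^{\otimes 3}, \mathbf v^{\otimes 3}\rangle$ combine as $\alpha - 2\alpha = -\alpha$, producing exactly the claimed identity.

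The main obstacle, modest as it is, is purely bookkeeping: verifying that every neglected cross term is uniformly $\mathcal O_P(1/n)$. This reduces to the observation that $\mathbf v^{\otimes 3}$ and $\langle \mathbf v^{\otimes 3}, \mathbf v^{\otimes 3}\rangle$ are fixed $O(1)$ population quantities, with the latter bounded away from zero by the assumption that at least three classifiers have distinct balanced accuracies different from $1/2$, so all remainders inherit the rate $\mathcal O_P(1/n)$ via Cauchy--Schwarz.
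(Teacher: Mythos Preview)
Your proposal is correct and follows essentially the same approach as the paper: write $\hat\alpha$ in closed form as a ratio of inner products, expand numerator and denominator to first order in the perturbations $\hat T-T$ and $\hat{\mathbf v}^{\otimes 3}-\mathbf v^{\otimes 3}$, and use $T=\alpha\,\mathbf v^{\otimes 3}$ to collect terms. You in fact spell out the final cancellation ($\alpha-2\alpha=-\alpha$) more explicitly than the paper does.
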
 
\begin{proof} 
The minimizer of Eq. (\ref{eq:hat_alpha}) is given by
\[
\hat\alpha=\frac{\langle \hat T,\hat{\bf v}^{\otimes 3}\rangle}
{\langle \hat{\bf v}^{\otimes 3},\hat{\bf v}^{\otimes 3}\rangle}
\]
According to \cite{Parisi_2014}, as $n\to\infty$, 
the estimate $\hat{\bf v}$ is $O(1/\sqrt{n})$ consistent, 
namely $\hat{\bf v}={\bf v}+\delta{\bf v}$, where $\delta{\bf v}=O_P(1/\sqrt{n})$.
Writing $\hat T = T + (\hat T-T)$ where the latter is also $O_P(1/\sqrt{n})$
and inserting these into the expression for \(\hat\alpha\)\ above gives that
\[
\hat\alpha = \frac{\langle T,{\bf v}^{\otimes 3}\rangle + \langle \hat T-T,{\bf v}^{\otimes 3}\rangle + \langle T,\hat{\bf v}^{\otimes 3}-{\bf v}^{\otimes 3}\rangle+O_P(1/n)}{\langle{\bf v}^{\otimes 3},{\bf v}^{\otimes 3}\rangle
+2\langle{\bf v}^{\otimes 3},{\hat{\bf v}}^{\otimes 3}-{\bf v}^{\otimes 3}\rangle+O_P(1/n)}
\]
Next, recall that \(T=\alpha{\bf v}^{\otimes 3}\). Now, keeping only the leading order error terms yields Eq. (\ref{eq:err_alpha}).
\end{proof}
According to Eq. (\ref{eq:err_alpha}), the estimation error depends on the statistical properties of the deviations $\hat{\bf v}-{\bf v}$ and  $\hat T-T$ and their correlations. While these are quite complicated, we may gain insight by looking at some particular instances. Assume for simplicity that all classifiers have comparable accuracies. Then,  $\langle {\bf v}^{\otimes 3},{\bf v}^{\otimes 3}\rangle \propto m(m-1)(m-2)/6 \cdot (2\pi-1)^6$. Hence, the estimation error in $\hat \alpha$ should decrease with the number of classifiers. Moreover, for a balanced problem with \(b=0\) and hence \(\alpha=0\), to leading order, the errors in $\hat\alpha$ and consequently also in $\hat b_{n}$ should not depend on the errors in estimating the eigenvector $\bf v$.
Figure \ref{fig:MAE_b_vs_n} shows this empirically. The \(x\)-axis is the number of classifiers,
the $y$-axis is the mean absolute deviation $\mathbb{E}[|\hat b_n-b|]$ (MAE), both on a log scale.
We considered two values $b=0$ and $b=0.3$, and for each value of $b$ we plotted two curves, one corresponding to the estimate $\hat b$ computed from $\hat\alpha$ based on $\hat{\bf v}$, and the second, an ``oracle" one, where $\hat\alpha$ is estimated using the true $\bf v$. Indeed, for $b=0$ both curves nearly coincide, in accordance to Eq. (\ref{eq:err_alpha}). In this simulation, all classifiers had a balanced accuracy in the range $[0.69,0.71]$, and $n=10,000$. 
These results suggest that it is potentially profitable to estimate the eigenvector ${\bf v}$ and the scalar $\alpha$ {\em jointly} from both the covariance matrix $\hat R$ and the tensor $\hat T$, and not separately as done in the present paper. This, as well as a more detailed study of the estimation errors are issues beyond the scope of the current work.  
\begin{figure}[t]
\centering
\includegraphics[width=0.45\textwidth]{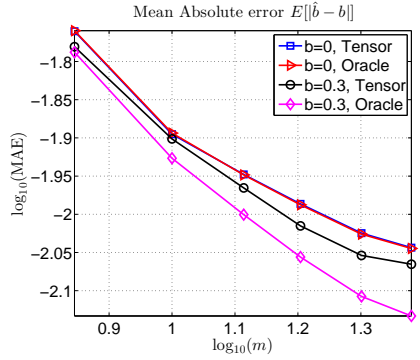}
\caption{Mean absolute error for the tensor based method, $\mathbb{E}[|\hat b_n-b|]$ vs. number of classifiers $m$, on log-log scale. }
        \label{fig:MAE_b_vs_n}
\end{figure}

\section{The Restricted Likelihood Function}

\begin{proof}[Proof of Theorem \ref{Thm:g}] By definition, the function $\hat g_n(\mathbf{f}(x)|\tilde b)$ in Eq. (\ref{eq:g_def}) is the log-likelihood of the observed vector \(\mathbf{f}(x)\) of predicted labels at an instance $x$, assuming the class imbalance is $\tilde b$ and using the estimates $\hat{\boldsymbol{\psi}}$ and $\hat{\boldsymbol{\eta}}$ for the sensitivities and specificities of the $m$ classifiers. 

Under the assumption that all classifiers make independent errors, the expression for $\Pr(\mathbf{f}(x)|\hat{\boldsymbol{\psi}},\hat{\boldsymbol{\eta}},\tilde b)$ is given by
\begin{multline}
        \label{eq:Prob_f_b}
\Pr(\mathbf f|\tilde b) = \Pr (y=1|\tilde b)\Pr(\mathbf f|\tilde b,y=1) +\\ \Pr (y=-1|\tilde b)\Pr(\mathbf f|\tilde b,y=-1)=  \\
\left(\tfrac{1+\tilde b}{2}\right)\prod_{i=1}^m\hat\psi_i^{\frac{1+f_i(x)}{2}}
(1-\hat\psi_i)^{\frac{1-f_i(x)}{2}}+\\
\left(\tfrac{1-\tilde b}{2}\right)
 \prod_{i=1}^m\hat\eta_i^{\frac{1-f_i(x)}{2}}(1-\hat\eta_i)^{\frac{1+f_i(x)}{2}}
  \end{multline}
We first prove Eq. (\ref{eq:b_max_Eg}),\ that upon using the exact log-likelihood function \(g(\mathbf{f}|\tilde b\)), its mean is maximized at the true value  $b$.
 To this end, we write the expectation explicitly,
\begin{eqnarray}
\mathbb{E}[g(\mathbf f|\tilde b)] &=& \sum_{\mathbf f \in \{-1,1\}^{m}}\Pr(\mathbf f|b)g(\mathbf f|\tilde b) \nonumber\\
&=& \sum_{\mathbf f \in \{-1,1\}^{m}}\Pr(\mathbf f|b)\log \Pr(\mathbf f|\tilde b)
\label{eq:b_exp}
\end{eqnarray}
Note the difference between the assumed class imbalance  $\tilde b$, which appears inside the logarithm, and its true value $b$, over which we take the expectation.

To prove Eq. \eqref{eq:b_max_Eg}, let us first present the following auxiliary lemma, which can be easily proved  using Lagrange multipliers.

\begin{lemma}
Consider the following function of \(k\) unknown variables
 $\{c_i\}_{i=1}^k$,
\begin{equation}
h(\{c_i\}_{i=1}^k|\{a_i\}_{i=1}^k)= \sum_{i=1}^k a_i \log(c_i).
\label{lem:a_c}
\end{equation}
where $\{a_i\}_{i=1}^k$ are $k$ non-negative constants. Under the constraints that $\sum_{i=1}^k c_i = 1$, and \(c_{i}\geq 0\), the function $h$ has a global maxima at $c_i = a_i$ for all $i$. 
\end{lemma}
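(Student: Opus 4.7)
The lemma is a classical characterization of the maximizer of a weighted log-sum on the probability simplex, and the hint in the paper is to use Lagrange multipliers. My plan is to do exactly that, with concavity of the objective turning the first-order condition into a sufficient condition for a global maximum. A preliminary observation: the stated conclusion ``$c_i = a_i$'' combined with the constraint $\sum_i c_i = 1$ forces the $a_i$'s to sum to $1$. In the application (Eq.\ \eqref{eq:b_exp}) the $a_i = \Pr(\mathbf f\,|\,b)$ do sum to $1$, so I treat this as an implicit hypothesis of the lemma.

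First I would establish existence of a maximum. Each summand $c_i \mapsto a_i \log c_i$ is concave on $[0,\infty)$ since $a_i \geq 0$, so $h$ is concave on the simplex $\Delta_k = \{c \in \mathbb{R}^k : c_i \geq 0, \sum_i c_i = 1\}$. With the standard convention $0\log 0 = 0$, the objective is upper semi-continuous on the compact set $\Delta_k$, so a maximizer exists. Moreover, any boundary point at which some $c_i = 0$ while $a_i > 0$ gives $h = -\infty$, so such points cannot be maximizers; it therefore suffices to look inside the relative interior when all $a_i$ are positive.

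Second, I would write the Lagrangian
\[
\mathcal{L}(c,\lambda) = \sum_{i=1}^k a_i \log c_i - \lambda\!\left(\sum_{i=1}^k c_i - 1\right),
\]
and set $\partial \mathcal{L}/\partial c_i = a_i/c_i - \lambda = 0$, which yields $c_i = a_i/\lambda$. Summing and using $\sum_i c_i = 1$ gives $\lambda = \sum_i a_i = 1$, hence $c_i = a_i$. Because $h$ is concave and the constraint set is convex, this stationary point is the unique global maximizer (uniqueness being strict whenever all $a_i > 0$, since the Hessian of $h$ is then strictly negative definite on the tangent of the constraint).

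The one mild obstacle is the degenerate case where some $a_i = 0$: the Lagrange condition $a_i/c_i = \lambda$ is ill-posed at such indices. I would handle this by noting that if $a_i = 0$, then the term $a_i \log c_i$ is identically $0$ (under the convention above) regardless of $c_i$, so those coordinates drop out of the objective and one can simply set $c_i = 0 = a_i$; the remaining coordinates reduce to the strictly positive case just analyzed, applied on a lower-dimensional sub-simplex. Beyond this technicality the proof is routine.
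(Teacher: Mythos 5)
Your proof is correct and follows exactly the route the paper indicates (the paper only remarks that the lemma ``can be easily proved using Lagrange multipliers'' without giving details). Your additional observations --- that the conclusion implicitly requires $\sum_i a_i = 1$, which holds in the application since the $a_{\mathbf f} = \Pr(\mathbf f\,|\,b)$ are probabilities, and the careful treatment of the boundary and of indices with $a_i = 0$ --- fill in the details the paper omits.
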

We use this lemma with \(k=2^{m}\) and the following set of $2^m$ constants $a_\mathbf{f}(b)=\Pr(\mathbf f|b)$, over all possible \(m\)-dimensional vectors \(\mathbf{f}\in\{-1,1\}^m\), and the $2^m$ variables $c_\mathbf{f} = \Pr(\mathbf f|\tilde b)$.
The expectation of $g$ is now equal to
\begin{equation}
G(\tilde b) = \mathbb{E}[g(\mathbf f|\tilde b)] = \sum_{i=1}^{2^m}a_i\log(c_i)
\end{equation} 
By Eq. \eqref{lem:a_c}, over all possible choices of \(c_{i}\), the expectation attains its maxima at $c_i = a_i$ for all $i$. Since at $\tilde b=b$, the corresponding probabilities \(\Pr(\mathbf{f}|\tilde b=b)=a_\mathbf{f}\), Eq. (\ref{eq:b_max_Eg}) follows.  

Next, we wish to prove that $\hat b_n\to b$ in probability. To this end, we follow the approach outlined in \cite{Newey_1991}, and prove the following uniform convergence in probability of \(\hat G_n\) to $G$, 
\[
\sup_{\tilde b \in[-1+\delta,1-\delta]} |\hat G_n(\tilde b)-G(\tilde b)| = o_P(1)
\]
This equation, coupled with the equicontinuity of $G$ implies the convergence in probability of the maximizer of $\hat G_n$ (namely $\hat b_n$) to that of $G$, which by Eq. (\ref{eq:b_max_Eg}) is $b$. 

As proved in \cite[Theorem 2.1]{Newey_1991}, this uniform convergence in probability is satisfied if and only if there is pointwise convergence of $\hat G_n(\tilde b)$ to $G(\tilde b)$, and $\hat G_n(\tilde b)$ is stochastic equicontinuous. Fortunately, a sufficient condition for the latter property is that $\hat G_n(\tilde b)$ is continuously differentiable and its derivative bounded, see \cite{Newey_1991} Corollary 2.2
and discussion after it. 

In our case, since $\hat G_n(\tilde b)=1/n \sum_i \hat g_n(\mathbf{f}(x_i)|\tilde b)$, it suffices to prove that for any vector $\mathbf{f}$, the function $\hat g_n(\mathbf{f} |\tilde b)$ is continuously differentiable with a bounded derivative. First note that by their definition, Eq. \eqref{eq:psi_eta_est}, the functions \(\hat\psi_i(\tilde b)\) and $\hat\eta_i(\tilde b)$ are continuously differentiable with bounded derivative for all $\tilde b\in[-1+\delta,1-\delta]$. Next, under the assumptions of the theorem, that \(\psi_i\) and $\eta_i$
are $\epsilon$ bounded from 0 and from 1, and hence also their estimates
can be restricted to  $\epsilon<\hat\psi_i,\hat\eta_i<1-\epsilon$,   the term inside the logarithm in Eq. (\ref{eq:g_def})\ is bounded away from zero. Hence, by its definition $\hat g_n$ satisfies the required condition.
\end{proof}

\section{Ambiguity in the Multi-Class Case}

\begin{proof}[Proof of Theorem \ref{Thm:multi}]
For simplicity, let us assume that all \(K\) class probabilities are equal, $p_i=\frac{1}{K}$ for $i=1,\ldots,K$. 
Let $f_i$ be the set of original classifiers with confusion matrices $\{\psi^{i}\}_{i=1}^m$. We shall now construct another set of classifiers with different confusion matrices that nonetheless lead to the \textit{same} values  $\mu_\mathcal A^i$ and $R_\mathcal A$ for all subsets $\mathcal A$.

To this end, assume that all entries of the first confusion matrix $\psi^1$ are strictly positive and strictly smaller than one. Consider a second set of confusion matrices $\{ \tilde \psi^i\}_{i=1}^m$ identical to the first, except for the following six changes in $\psi^{1}$: For three fixed indices $j\neq k\neq l$, let
\begin{eqnarray}
\tilde \psi^1_{jk} = \psi^1_{jk}+\Delta & \tilde \psi^1_{kj} = \psi^1_{kj}-\Delta \notag\\
\tilde \psi^1_{lj} = \psi^1_{lj}+\Delta & \tilde \psi^1_{jl} = \psi^1_{jl}-\Delta \notag\\
\tilde \psi^1_{kl} = \psi^1_{kl}+\Delta & \tilde \psi^1_{lk} = \psi^1_{lk}-\Delta \notag
\end{eqnarray}
where $\Delta$ is sufficiently small so that all entries of $\tilde \psi^1$ are in $[0,1]$. 

Note that the new matrix \(\tilde\psi^1\) is
a valid confusion matrix, since
for any column \(r\in\{1,\ldots,K\}\)
\[
\sum_{i=1}^K \tilde \psi_{ir}^1= 1.
\]
Let \(\tilde f_1\) be the classifier corresponding to the modified matrix $\tilde \psi^1$. Next, note that the first order statistics of $\tilde f_1$\ and of $f_1$ are unchanged. Indeed, by definition
\[
\Pr(\tilde f_1(X) = r)= \frac{1}{K} \sum_{i=1}^K \tilde \psi_{ri}^1 
\]
If $r\notin\{j,k,l\}$, then $\tilde\psi^1_{ri}=\psi^1_{ri}$ and thus
\begin{equation}
\Pr(\tilde f_1(X) = r) = \Pr(f_1(X) = r)
        \label{eq:ftilde_f}
\end{equation}
If $r\in\{j,k,l\}$, then by construction, in the $r$-th row of $\tilde\psi^1$ there are precisely two modified entries, one increased by $\Delta$ and the other reduced by $\Delta$, so overall the above equation still holds. 
Eq. (\ref{eq:ftilde_f}) directly implies that  
$\tilde \mu_\mathcal A^1=\mu_\mathcal A^1$ for all subsets \(\mathcal A\). 

Next, let us
show that the covariance matrices \(R_{\mathcal A}\) also remain unchanged. 
Recall that the entries of  $R_\mathcal A$ are determined by the values $\psi_\mathcal A^1 \ldots \psi_\mathcal A^m$ and $\eta_\mathcal A^1 \ldots \eta_\mathcal A^m$. 
Hence, it suffices to show that for all subsets $\mathcal A$
\begin{equation}
\tilde\psi^1_\mathcal A=\psi^1_\mathcal A
\quad\mbox{and}
\quad
\tilde\eta^1_\mathcal A=\eta^1_\mathcal A
        \label{eq:psi_tilde_psi}
\end{equation}
To this end, recall that by definition
\[
\tilde \psi_\mathcal A^1 = \frac{1}{K}\sum_{i,i' \in \mathcal{A}}\tilde  \psi_{ii'}^1
\quad\mbox{and}
\quad
\tilde \eta_\mathcal A^1 = \frac{1}{K}\sum_{i,i' \notin \mathcal{A}}\tilde  \psi_{ii'}^1
\]
First consider the case $|\mathcal A\cap\{j,k,l\}|=0$. Here, all relevant entries in the sum for $\tilde\psi^1_\mathcal A$ are unchanged. In contrast, the sum for $\tilde\eta^1_\mathcal A$ includes all six modified entries. Both sums remain unchanged, and so Eq. (\ref{eq:psi_tilde_psi}) holds. 

The proof for the other cases, where $\mathcal A\cap\{j,k,l\}\ne \emptyset$ follows similar arguments. 

To conclude, both $\{\psi^i\}_{i=1}^m$ and $\{\tilde\psi^i\}_{i=1}^m$ have the same values $\mu^{i}_\mathcal A$ and covariance matrices $R_\mathcal A$. 
\end{proof}

\section{Ensemble of Machine Learning Classifiers }

Table \ref{Table_1} presents the 10 different classifiers used in our experiments. For each dataset, each classifier was trained with $200$ different (randomly chosen) instances.
\vspace{1em}
\begin{table}[t]

\begin{tabular}{|l| l|}
\hline
\textbf{classifier} & \textbf{Weka library} \\
\hline
IBk - K nearest &\\ neighbours, $K=1$ & lazy.IBk\\
\hline
KStar - Instance& \\based classifier & lazy.KStar\\
\hline
J48 - Decision tree  & trees.J48\\
\hline
PART - Partial decision& \\trees classifier   & rules.PART \\
\hline
LMT - Logistic model&\\ trees   & trees.LMT\\
\hline
Random forest - &\\with $n=10$ trees   & trees.RandomForest\\
 \hline
Logistic Regression   & functions.SimpleLogistic\\
 \hline
Decision Stump - &\\One level decision tree   & trees.DecisionStump\\
\hline
Sequential Minimal&\\ Optimization   & functions.SMO\\
\hline
NaiveBayes   & bayes.NaiveBayes \\
\hline
\end{tabular}
\caption{10 classification methods implemented in the software package Weka.}
\label{Table_1}
\end{table}

\section{Real Datasets}

We tested our methods on a total of  five datasets, 4  from the UCI repository and the MNIST\ digits data. A short description of each of the datasets is given in Table \ref{Table_UCI}.
A comparison of the performance of various ensemble learners on these
datasets appears in Fig. \ref{Fig:Appendix_Results}.

\begin{table}[t]
\begin{tabular}{ p{10mm} p{30mm} p{15mm} p{15mm}}
\hline
\textbf{dataset} & \textbf{Task} & \textbf{instances} & \textbf{attributes}\\
 \hline
Magic & classifying gamma rays from background noise & $19000 $& 11\\
\hline
Spam & classifying spam from regular mail&$4600$ &$57$\\
\hline
Musk & classifying different types of molecules to be 'musk' or 'non musk'&$6600 $& $88$\\
\hline
Miniboo & distinguish electron neutrinos (signal) from muon neutrinos (background)'&$130000 $& $50$\\
\hline
Mnist & To define a binary problem, we divided the  MNIST data set into two classes as follows: $0-4$ vs. $5-9$&$40000 $&$ 28^2$\\
\end{tabular}
\caption{Properties of datasets from the UCI repository}
        \label{Table_UCI}
\end{table}

\begin{figure*}[!t]  
\label{Fig:N}
\begin{subfigure}{.4\textwidth}
  \centering
  \includegraphics[width=\textwidth]{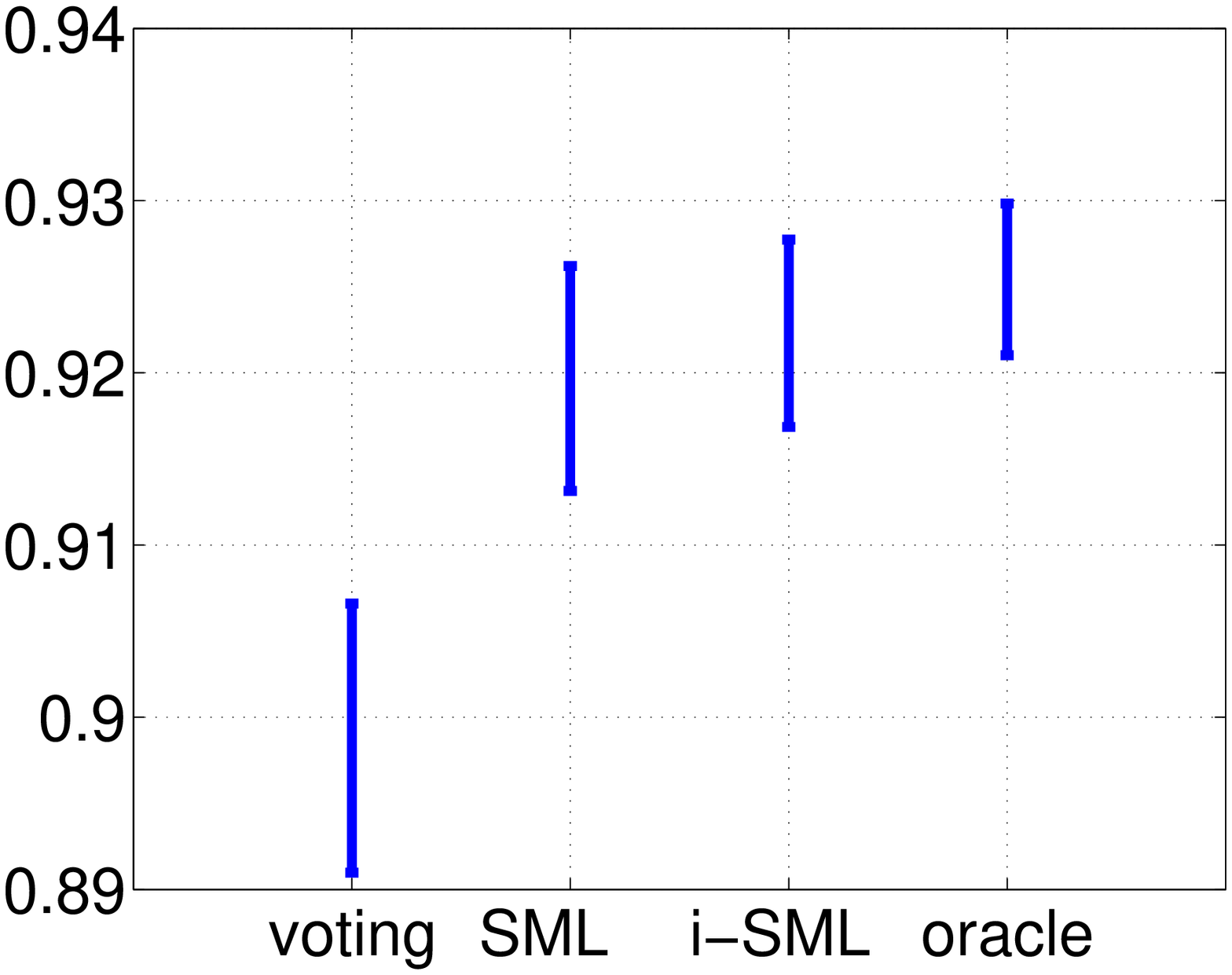}
  \caption{Spam Dataset}
\label{Fig:N_B}
\end{subfigure}
\begin{subfigure}{.4\textwidth}
  \centering
  \includegraphics[width=\textwidth]{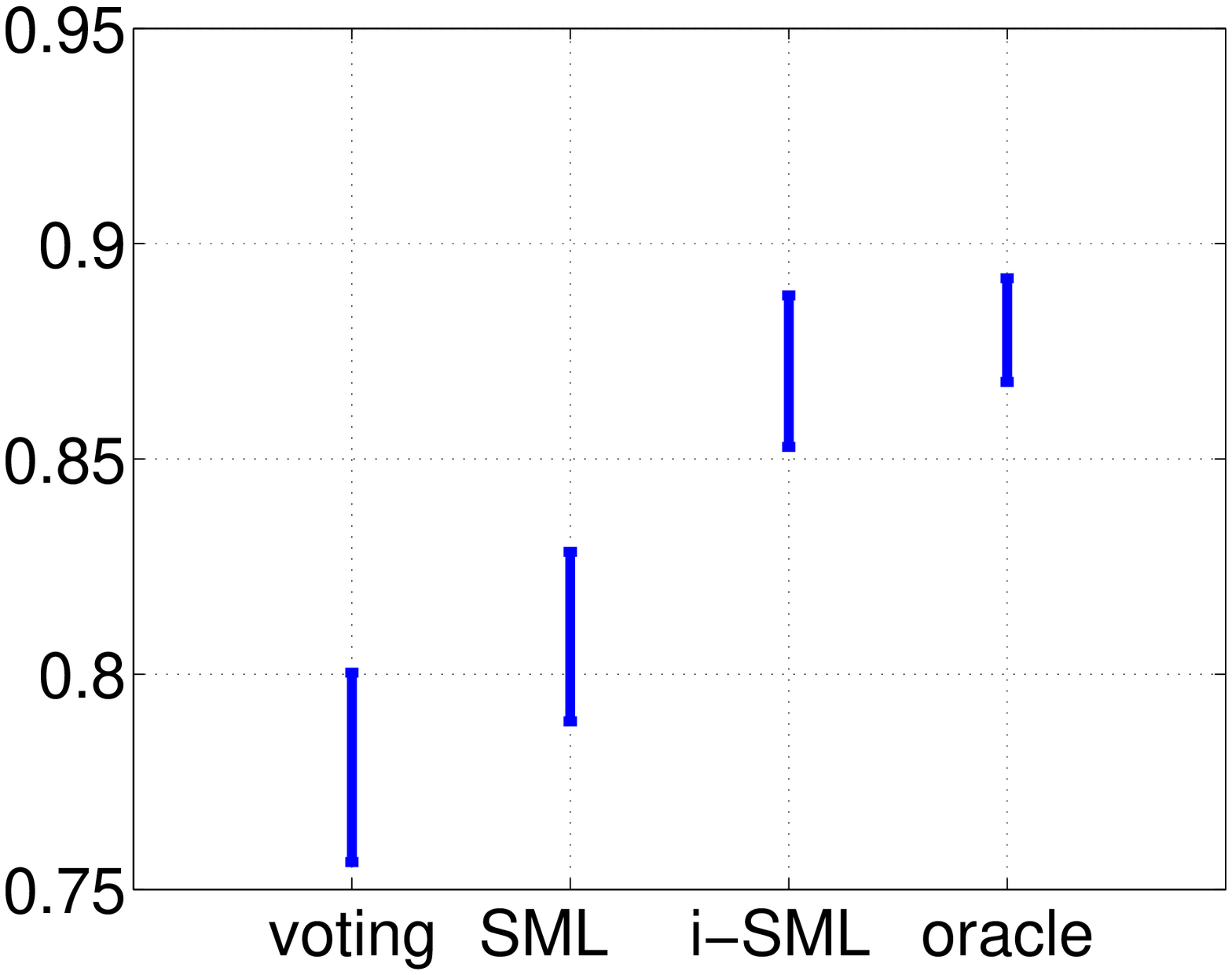}
  \caption{Musk Database}
\label{Fig:N_B}
\end{subfigure}

\begin{subfigure}{.4\textwidth}
  \centering
  \includegraphics[width=\textwidth]{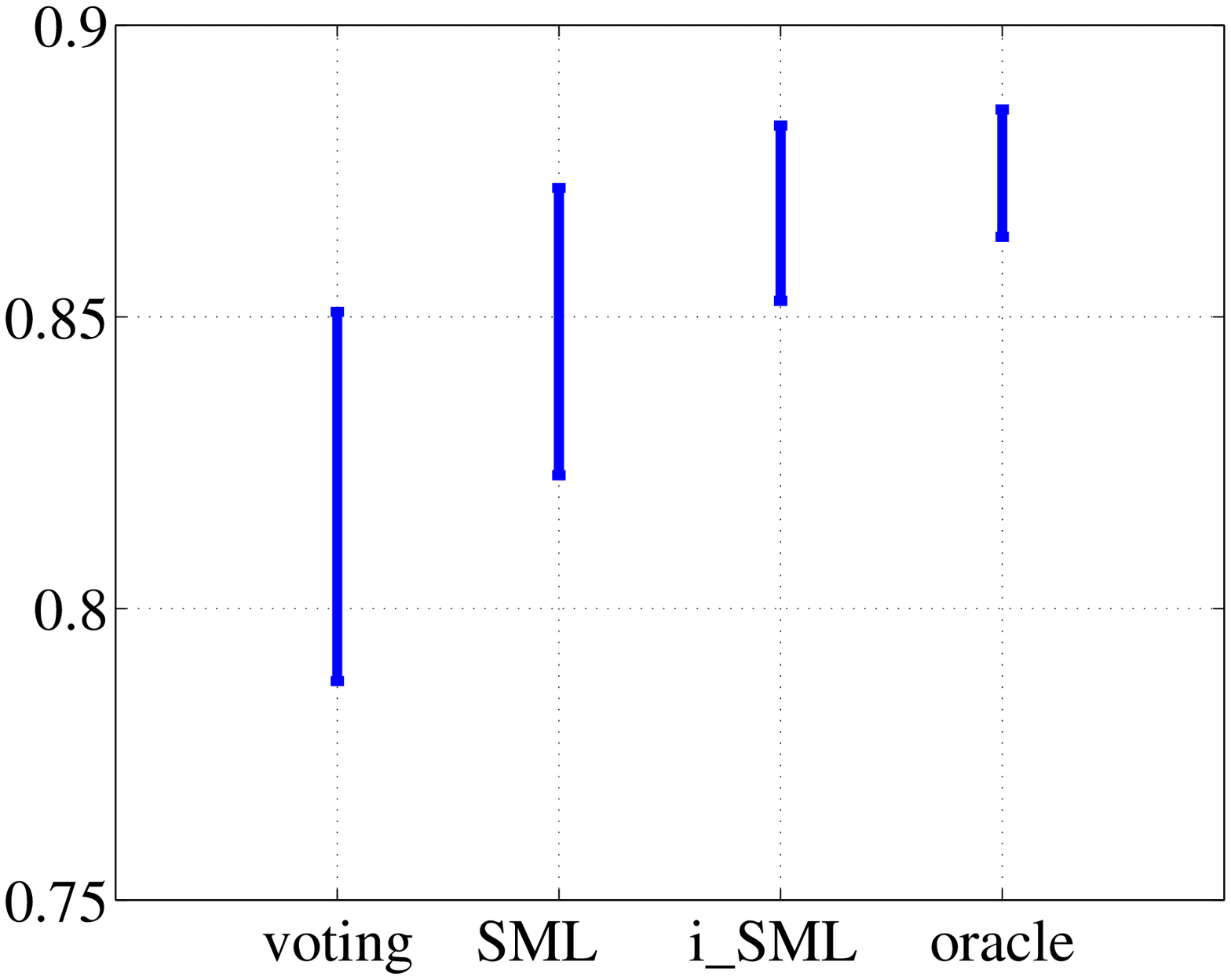}
  \caption{Miniboo Dataset}
\label{Fig:N_B}
\end{subfigure}
  \begin{subfigure}{.4\textwidth}
  \centering
  \includegraphics[width=\textwidth]{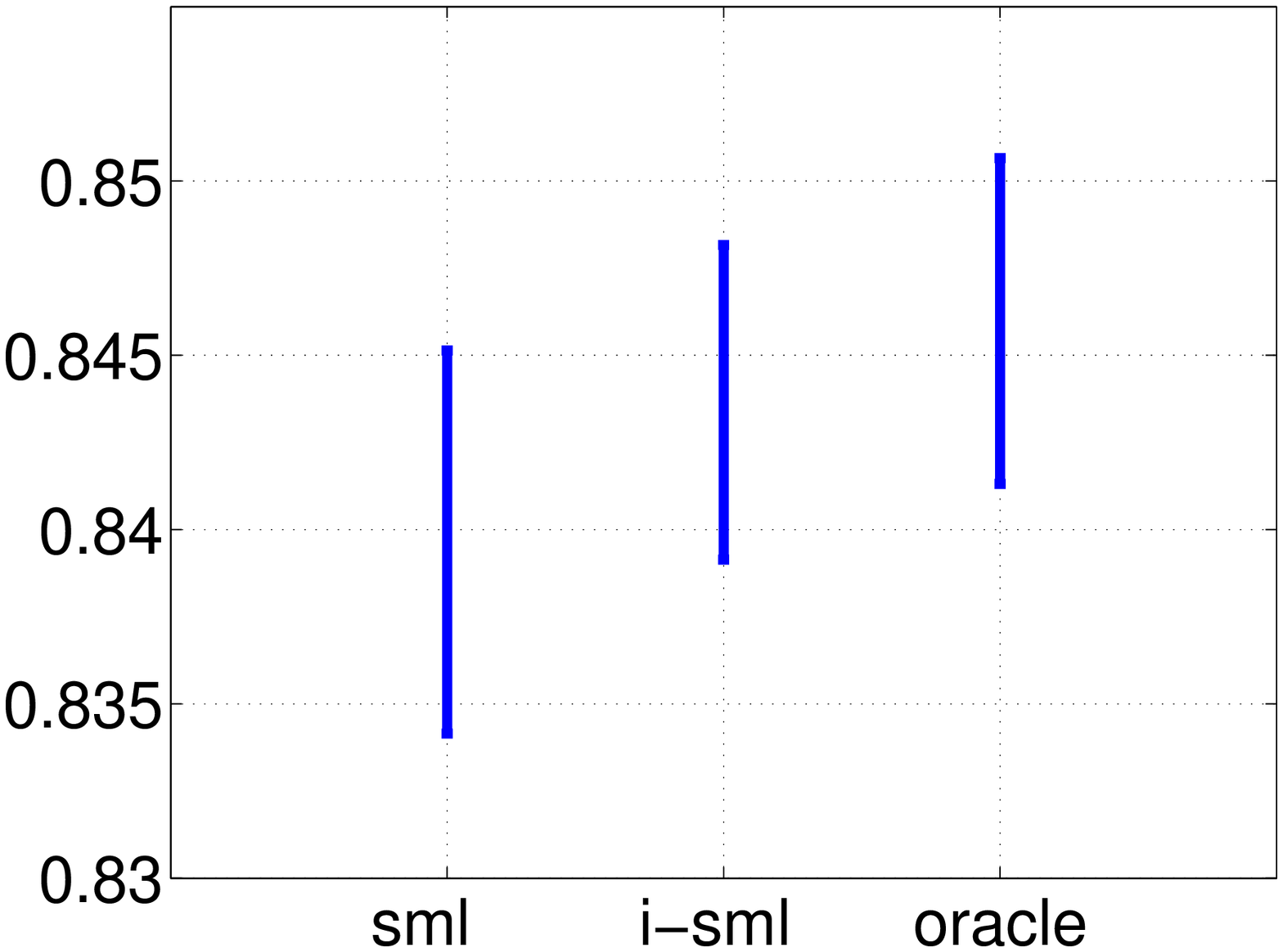}
  \caption{'MNIST' Dataset.}
\label{Fig:N_A}
\end{subfigure}
\caption{The balanced accuracies of 4 unsupervised ensemble learning algorithms, all with  $m=10$ classifiers. In panel \ref{Fig:N_A} we do not show the accuracy of majority voting which was significantly lower than all others. }
        \label{Fig:Appendix_Results}
\end{figure*}

\end{document}